\documentclass[11pt]{article}

\usepackage[utf8]{inputenc}
\ifdefined\pdfgentounicode
  \input{glyphtounicode}
\fi
\usepackage{amsmath,amssymb,amsthm}
\usepackage{mathtools}
\usepackage{bm}
\usepackage{geometry}
\usepackage{enumitem}
\usepackage{booktabs}
\usepackage{hyperref}
\hypersetup{colorlinks=true,linkcolor=blue,citecolor=blue,urlcolor=blue}
\usepackage{cleveref}
\usepackage{graphicx}
\usepackage{tikz}
\usetikzlibrary{arrows.meta,positioning,fit,backgrounds}
\usepackage[numbers,sort&compress]{natbib}
\usepackage{xcolor}
\usepackage{listings}
\theoremstyle{definition}
\newtheorem{definition}{Definition}[section]
\newtheorem{assumption}{Assumption}[section]
\newtheorem{proposition}{Proposition}[section]
\newtheorem{lemma}{Lemma}[section]
\newtheorem{corollary}{Corollary}[section]
\newtheorem{remark}{Remark}[section]
\newtheorem{example}{Example}[section]
\newtheorem{prediction}{Prediction}[section]
\theoremstyle{plain}

\newcommand{\R}{\mathbb{R}}
\newcommand{\Orth}{\mathrm{O}}
\newcommand{\norm}[1]{\left\lVert #1 \right\rVert}
\newcommand{\inner}[2]{\left\langle #1,\, #2 \right\rangle}
\newcommand{\HH}{\bm{H}}
\newcommand{\GG}{\bm{K}}
\newcommand{\QQ}{\bm{Q}}
\newcommand{\MM}{\bm{M}}
\newcommand{\Grp}{\mathcal{G}}
\newcommand{\Probe}{\mathcal{X}}
\newcommand{\Win}{\bm{W}_{\mathrm{in}}}
\newcommand{\Labs}{L_{\mathrm{abs}}}
\newcommand{\Obj}{\mathcal{L}}
\newcommand{\WW}{\bm{W}}
\newcommand{\Lam}{\bm{\Lambda}}
\newcommand{\rank}{\operatorname{rank}}
\newcommand{\tr}{\operatorname{tr}}
\newcommand{\CKA}{\mathrm{CKA}}
\DeclareMathOperator*{\argmin}{arg\,min}

\title{\textbf{Teacher Supervision over Representation Equivalence Classes:
Why Distillation Transfers Capability Through the Output Function, Not the Features}}

\author{}
\date{}

\begin{document}

\begin{center}
  {\LARGE\bfseries
   Teacher Supervision over Representation\\[2pt]
   Equivalence Classes:\\[4pt]
   {\large Why Distillation Transfers Capability Through the Output Function,
   Not the Features}}\\[12pt]
  {\large Sang-Il Han$^{1,2}$}\\[6pt]
  {\normalsize
   $^1$Korea University of Technology and Education
   \quad
   $^2$Biomonster Co.}\\[2pt]
  {\normalsize
   \href{mailto:sihan@koreatech.ac.kr}{\texttt{sihan@koreatech.ac.kr}}}\\[4pt]
  {\small April 2026}\\[8pt]
\end{center}

\begin{abstract}
Knowledge distillation is usually framed as a choice of \emph{what} to match in
the teacher---its logits, hidden features, or sample relations---which
presupposes that the teacher's representation has absolute coordinates to match.
It does not: a pretrained representation is identifiable only up to an
orthogonal--and--isotropic--scaling equivalence class, so a student should learn
the teacher's \emph{equivalence class}, not its features. The organizing fact is
that capability is the teacher's \emph{output function}, a class invariant that
factors through the quotient by the class action---so an objective recovers
capability exactly when it is defined there. This makes absolute feature matching
ill--posed and admissible supervision a matter of targeting class invariants
(Gram structure, CKA, principal subspaces) or aligning coordinates first,
unifying feature matching, relational distillation, alignment, and grafting in
one geometric account. We validate on Qwen2.5 and Llama--3.1: a restoration study
recovers a corrupted model's representation ($\CKA\approx0.99$) but \emph{not} its
capability, and an ablation isolates the cause---output--function (logit) matching
drives capability while matching hidden representations aligns geometry without
restoring function. Recovery is confined to the corpus--covered region, and a graft
study confirms boundary overlap predicts transplant success but is necessary,
not sufficient.
\end{abstract}

\section{Introduction}
\label{sec:intro}

Here is a puzzle. Take a corrupted language model and restore its internal
representation until it almost exactly matches the teacher's---per--layer
$\CKA\approx0.99$, by a label--free signal. The model stays broken: its
perplexity is $10^2$\,--\,$10^4\times$ worse and its top--1 agreement with the
teacher is near zero. Now instead match the teacher's \emph{output
distribution} and ignore the internal representation: capability returns
(perplexity ratio $\approx1.01$, top--1 $0.98$), even though the representation
is no better aligned than before. Matching what is inside the network does not
transfer what the network can do; matching what comes out of it does. Why?
This paper's answer is geometric, and it is the reason for the machinery that
follows: a representation has no absolute coordinates, so ``match the features''
is the wrong target---capability is an \emph{invariant} of the representation's
equivalence class, namely its output function, and only an objective defined on
that invariant transfers it.

Before any of that machinery, it helps to fix what a model \emph{is}. A network
maps an input $x$ to an output $f(x)=W\!\bigl(H(x)\bigr)$: the input is carried to
an internal representation $H(x)$, and a reader $W$ (the downstream weights) turns
that representation into the output. Because the reader can absorb any rotation
or rescaling of the representation, the same function $f$ is computed by a whole
family of pairs $(H,W)$, so a model is properly not a single $(H,W)$ but the
\emph{class} of all pairs computing the same output. Capability is a property of
that class, not of $H$ alone---which is exactly why matching $H$ can leave the
function untouched. The underlying fact fits in a few lines of code.
A representation is just an intermediate activation, and the layer that reads it
can absorb a change of its coordinate frame:
\begin{center}
\begin{minipage}{0.74\linewidth}
\begin{verbatim}
h  = x @ W1               # a hidden representation
y  = h @ W2               # the layer that reads it -> output

Q  = random_rotation()    # any orthogonal change of basis
h2 = h @ Q                # rewrite the representation...
W2 = Q.T @ W2             # ...and the reader, together

assert allclose(y, h2 @ W2)   # same output, always
\end{verbatim}
\end{minipage}
\end{center}
The rotated network stores completely different numbers in \texttt{h2} yet
computes the identical function. So the specific coordinates of \texttt{h} carry
no information the model can use: they are an arbitrary frame, fixed only up to
the rotations (and rescalings) the next layer undoes. This is the whole premise
in miniature---\emph{a feature has coordinates, the coordinates have no meaning,
and only the function is determined}---and it is exactly why matching a teacher's
features basis--for--basis differs from matching its output, which is invariant
to the rewrite above. The rest of the paper turns
this three--line \texttt{assert} into a statement about equivalence classes and
the quotients they form; \cref{sec:code-appendix} gives the same three objects
this induces---orbit, fiber, quotient---as runnable code for the reader who
prefers to meet them that way first.

Two words in that sentence carry the whole paper, and both have exact analogues
a systems or compiler reader already owns. A \emph{feature} (the vector
\texttt{h} above) is an \emph{internal representation used by a computation}---the
neural--network analogue of a compiler's intermediate representation (IR). Just
as renaming an SSA temporary changes the IR while computing the same program,
$\texttt{h}\mapsto\texttt{h @ Q}$ changes the feature while computing the same
function: the coordinates of an IR, or of a feature, are an implementation
detail, not meaning. An \emph{invariant} is the dual notion---the quantity that
does \emph{not} change under such a rewrite. This paper's one empirical claim, in
that vocabulary, is simple: the invariant a downstream consumer can rely on is
the \emph{output function}, not the feature. The feature is the model's IR, free
to be recoordinatized; capability is the invariant.

Distillation and module transfer both raise the same question: in what
coordinate system does the ``knowledge'' of a representation live? The
established lines of distillation each answer a \emph{what--to--match} question:
logit distillation~\citep{hinton2015} matches output distributions, feature
distillation~\citep{romero2015} matches hidden activations, and relational
distillation~\citep{park2019,tung2019} matches sample--to--sample structure.
All three presuppose that there is a well--defined teacher quantity to match.
We step back one level and question that premise. A student should reproduce
the teacher's features basis--for--basis only if those features have an
absolute basis to begin with---and they do not: representations are
unidentifiable up to a group of transformations. The consequence is not a new
loss but a change in what teacher supervision \emph{is}. The teacher is not a
set of feature vectors but an equivalence class of them, and distillation is
the transfer of that class (made precise in \cref{def:distill}). We make this precise: absolute--coordinate matching
is ill--posed, and transfer must
target the \emph{invariants} of the representation's equivalence class, or
operate in coordinates fixed by an explicit alignment. Read this way, feature
matching, relational distillation, alignment--based transfer, and module
grafting are not separate techniques but different points in one geometric
formulation, distinguished by which part of the class structure they fix.

The distinction from prior work is sharp, and worth stating before a reader
files this under ``another use of CKA.'' The difference is one of \emph{question},
not method. Representation--similarity research
\citep{kornblith2019,klabunde2023} asks \emph{how alike} two representations
are, and answers with a diagnostic number. We ask a different question:
\emph{what transfers a capability?} We therefore propose no new similarity
measure and no new loss. We change the \emph{object} of distillation: the
representation is itself an equivalence class, and ``what should we match''
becomes ``what is defined on the quotient the output function factors through.''

This reframing places $\CKA$ precisely. It is one coordinate on the
representation quotient---the \emph{coarser} of the two, which forgets the reader
(\cref{sec:invariants}). Capability instead lives on the finer, joint quotient
(\cref{prop:quotient}), so $\CKA$ is the \emph{wrong} coordinate for transferring
it. A similarity paper would stop at measuring it; our point is the opposite. A
high $\CKA$ can coexist with zero capability transfer (\cref{tab:ablation}),
precisely because the two live on different quotients.

Our contribution is a single conceptual reframing of teacher supervision---from
matching teacher \emph{features} to transferring the teacher's representation
\emph{equivalence class}---in three parts.
\textbf{Theory:} absolute--coordinate feature matching is ill--posed
(\cref{sec:unident}), and the output function is the class invariant capability
depends on, factoring through the quotient as $f=\bar f\circ\pi_{\mathrm{joint}}$
(\cref{prop:quotient}).
\textbf{Loss taxonomy:} this factorization sorts basis--invariant objectives and
locates standard logit and feature distillation within it (\cref{sec:objectives}),
and predicts graft success from subspace overlap (\cref{sec:graft}).
\textbf{Experiment:} restoration, ablation, and graft studies on Qwen2.5 and
Llama--3.1 validate the account (\cref{sec:experiments,sec:graft}).

These are not three separate stories but one: teacher supervision transfers an
equivalence class, and its operational core is a single distinction---a
representation is an equivalence class, but capability is the output function, a
class invariant (\cref{prop:quotient}).

\paragraph{Terms, fixed once.} We use four words in fixed senses. The
\emph{representation} is the layer activations $\HH$, defined only up to the class
$[\HH]$ (\cref{def:representation}); the \emph{output function} $f$ maps input to
logits and is a joint--class invariant (\cref{prop:quotient}); \emph{capability}
is any scalar read off that function (perplexity, top--1)---deliberately narrow,
\emph{not} a claim about reasoning or generalization beyond what the logits
encode; and a teacher's \emph{knowledge}, informally, means exactly that output
function, the transferable content. The thesis relates them: capability rides on
the output function, invariant to the representation's coordinates. The narrowness
has teeth---one--step functionals can be restored while trajectory behavior is
not (\cref{res:crosswidth})---so ``capability transfers'' always means this
output--function sense and no more.

Each experiment tests a consequence of this one
statement. The restoration study (\cref{sec:experiments}) shows the two come
apart---aligning the representation ($\CKA\!\approx\!1$) does not restore the
function; restoring the function (output--function matching) does not require
aligning the representation. The decoder--from--scratch result shows what
follows when only the class--invariant output function is targeted: the
teacher is rebuilt on whatever subspace the corpus covers. The graft study
shows the same geometry governs a different operation---module transfer
succeeds to the extent the boundary subspaces overlap. Read together, they say
that the teacher's transferable content is its output function (a class
invariant), and ``what transfers'' is the part of that function one's objective
and corpus actually target.

\begin{figure}[h]
\centering
\begin{tikzpicture}[
  >={Stealth[length=2.5mm]},
  node distance=20mm and 13mm,
  stage/.style={draw, rounded corners, align=center, inner sep=3.5pt,
                font=\footnotesize, minimum height=7mm, text width=22mm},
  theory/.style={stage, fill=black!4},
  pred/.style={stage, fill=black!4},
  expt/.style={stage, fill=black!4},
  every edge/.style={draw, ->, thick}
]
\node[theory] (a) {\textbf{Assumption 2.1}\\ rotations are absorbable};
\node[theory, right=of a] (p2) {\textbf{Prop.\ 2.1}\\ absolute matching ill--posed};
\node[theory, right=of p2] (p32) {\textbf{Prop.\ 3.2}\\ output function $=$ class invariant; capability follows $f=\bar f\!\circ\!\pi_{\mathrm{joint}}$};
\node[expt, below=11mm of p32] (r2) {\textbf{Result 2}\\ ablation: $L_{\mathrm{logit}}$ recovers capability, $L_{\CKA}$ does not};
\node[pred, left=of r2] (pr) {\textbf{Pred.\ 6.1--6.3}\\ graft success $\sim$ overlap};
\node[expt, left=of pr] (disc) {\textbf{Discussion}\\ replication, not creation};

\draw[->, thick] (a) -- (p2);
\draw[->, thick] (p2) -- (p32);
\draw[->, thick] (p32) -- (r2);
\draw[->, thick] (r2) -- node[font=\scriptsize, above]{validates} (pr);
\draw[->, thick] (pr) -- (disc);
\draw[->, thick] (p2.south) to[bend right=12] (pr.north);
\end{tikzpicture}
\caption{The logical spine. \cref{ass:absorb} (rotations can be absorbed) makes
absolute feature matching ill--posed (\cref{prop:illposed}) and identifies the
output function as the class invariant capability depends on
(\cref{prop:quotient}, the central result). The ablation of Result~2
(\cref{res:ablation}) is its experimental counterpart; the same geometry yields
falsifiable graft predictions (Predictions~\ref{pred:mono}--\ref{pred:align}),
validated in \cref{sec:graft}; the Discussion places the whole as
\emph{replication}, not creation. Every later section is a consequence of the
single premise on the left.}
\label{fig:spine}
\end{figure}

\section{Representational Unidentifiability}
\label{sec:unident}

Fix a finite \emph{probe set} $\Probe=\{x_1,\dots,x_N\}$ of inputs, and let
$\HH=\HH(\Probe)\in\R^{N\times d}$ collect the $d$--dimensional representations
of a chosen layer, one row $h_\ell(x_i)$ per probe. All representational
quantities below---$\HH$, its Gram matrix, $\CKA$, the equivalence class
$[\HH]$, and the quotients of \cref{sec:invariants}---are defined relative to
$\Probe$: they describe what the representation encodes \emph{about this probe
set}. We keep $\Probe$ fixed throughout a given comparison and suppress it in
the notation when no ambiguity arises, writing $\HH$ for $\HH(\Probe)$; where the
choice of probes matters (the corpus--coverage results of
\cref{sec:experiments}) we make the dependence explicit. The pointwise object
$h_\ell(x)\in\R^d$ of \cref{prop:outinv} is the single--input representation; the
probe matrix $\HH(\Probe)$ stacks it over $\Probe$. Throughout, $\Orth(d)=\{\QQ\in
\R^{d\times d}:\QQ^\top\QQ=\QQ\QQ^\top=\bm{I}\}$ denotes the orthogonal group in
dimension $d$ (rotations and reflections; $\QQ^{-1}=\QQ^\top$), acting on
representations by right multiplication $h\mapsto h\QQ$.

\begin{definition}[Representation]
\label{def:representation}
Fix a probe set $\Probe=\{x_1,\dots,x_N\}$ and a layer $\ell$ of the network. The
\emph{representation} at that layer is the matrix
\begin{equation}
\label{eq:representation}
\HH=\HH(\Probe)\in\R^{N\times d},
\qquad \text{row } i \text{ of } \HH \;=\; h_\ell(x_i)\in\R^{d},
\end{equation}
whose $i$th row is the layer--$\ell$ hidden activation elicited by probe $x_i$
(the residual--stream state after block $\ell$, in a transformer). It is a
purely empirical object: the network's internal features \emph{sampled} on
$\Probe$, and it is defined only relative to $\Probe$---a different probe set
yields a different $\HH$. The group $\Grp=\Orth(d)\times\R_+$ acts on it by
$\HH\mapsto c\,\HH\QQ$; this action is what \cref{def:equiv} quotients to form
the equivalence class $[\HH]$.
\end{definition}

\emph{Naming, fixed once.} Throughout, $\HH$ (the layer activations) is the
\emph{representation} and $\WW$ (specifically $\Win^{(\ell)}$) is the
\emph{reader}; the function depends on the pair together, so neither alone is
``the representation'' (\cref{app:naming} defends this usage against the
representation--theory and computer--science readers' objections).

A released model fixes $\theta$, not a representation $\HH$: the shipped weights
commit to one coordinate frame of the joint class, which is why
coordinate--sensitive operations across independently released models need an
explicit alignment (\cref{def:align}). \Cref{app:released} makes this precise.

Intuitively, let $\Win^{(\ell)}$ denote the first downstream operator that
consumes the representation $h_\ell$, that is, the linear map carrying one layer
to the next,
\[
h_\ell \ \overset{\Win^{(\ell)}}{\longrightarrow}\ h_{\ell+1}.
\]
Assumption 2.1 states that any admissible coordinate change of $h_\ell$ can be
absorbed by a counter--reparametrization of $\Win^{(\ell)}$, leaving the network
function unchanged---rotate the features one way, rotate the weights that read
them the other way, and the two cancel. (We write $\Win$ for $\Win^{(\ell)}$
when the layer is fixed.) This is the single premise from which the rest of the
paper follows. \Cref{app:status} discusses when it holds exactly (a plain MLP)
and when only approximately (transformers, where normalization can break it).

\begin{assumption}[Absorbable reparametrization]
\label{ass:absorb}
Let the map from the layer--$\ell$ representation to the output factor as
$\phi_{\ell{:}L}=\rho\circ\psi:\R^{d}\to\R^{V}$, where the first stage consumes
the representation through a linear operator $\Win\in\R^{d\times m}$ followed by a
nonlinearity, $\psi(h)=\tilde\psi(h\,\Win)$, and any normalization between $h$
and $\Win$ commutes with $\QQ$ (exactly on the subgroup of \cref{app:status}).
Then for $\QQ\in\Orth(d)$ and $c\in\R_+$ the paired actions $h\mapsto c\,h\QQ$ and
$\Win\mapsto c^{-1}\QQ^{\top}\Win$ leave the consumed product invariant, giving
the \emph{absorbing identity}
\begin{equation}
\label{eq:absorb}
\underbrace{\tilde\psi\bigl((h\QQ)(\QQ^{\top}\Win)\bigr)=\tilde\psi(h\Win)}
_{\text{(\ref{eq:absorb}a) cancellation: }\QQ\QQ^\top=\bm{I}}
\quad\Longrightarrow\quad
\underbrace{\phi_{\ell{:}L}^{\QQ}(h\QQ)=\phi_{\ell{:}L}(h)}
_{\text{(\ref{eq:absorb}b) function unchanged}},
\end{equation}
so the full group acting absorbably is $\Grp=\Orth(d)\times\R_+$.
\end{assumption}

A caveat on the group. The full $\Orth(d)$ action is exact in the idealized
linear/MLP case; in a real transformer, where attention $Q/K/V$ projections, MLP
projections, the residual stream, and RMSNorm gains are coupled, the exact group
is the \emph{architecture--preserving subgroup} of reparametrizations that every
downstream operator can absorb, with the full $\Orth(d)$ serving as the
organizing approximation. Everything below is stated for $\Orth(d)$ for clarity;
the claims that matter---ill--posedness of absolute matching, invariance of the
output function---hold verbatim on the subgroup, since it is a subgroup, and the
appendix MLP (\cref{tab:mlp}) exhibits the exact case.

Here $\phi_{\ell{:}L}^{\QQ}$ denotes $\phi_{\ell{:}L}$ with $\Win$ so
reparametrized; the cancellation is $(c\,h\QQ)(c^{-1}\QQ^{\top}\Win)=h\Win$, the
two actions exactly inverse on the product $h\Win$ that every downstream
computation sees. We state \cref{eq:absorb} for $c=1$ to keep the rotation
visible; the scale factor reappears in \cref{prop:outinv,prop:quotient}.
Conventions: $h\in\R^{1\times d}$ is one row $h_\ell(x)$ of the probe matrix, so
the assumption is pointwise in $x$ and the statement for $\HH(\Probe)$ follows by
stacking over rows, using row vectors throughout.

\Cref{app:transformer} instantiates $\Win$, $\psi$, and $\rho$ concretely for a
transformer (the residual stream and its query/key/value projections) and for a
plain MLP, where the absorbing identity is exact.

\emph{Exact vs.\ approximate.} The absorbing identity is exact when the
normalization between $\HH$ and $\Win$ commutes with $\QQ$---plain RMSNorm does;
a learned per--channel gain $\bm{\gamma}$ does so only on the subgroup it
preserves, and is otherwise approximate with error set by the spread of
$\bm{\gamma}$ (\cref{app:status}).

\emph{Status of the propositions.} On the subgroup $\Grp_0$ where
\cref{ass:absorb} holds exactly, \cref{prop:illposed,prop:outinv,prop:quotient}
are \emph{exact theorems}; only the \emph{size} of the equivalence class is
approximate for general $\QQ$, so the qualitative claims hold as soon as $\Grp_0$
is nontrivial, which it always is (\cref{app:status}).

The remainder of the paper develops the consequences of \cref{ass:absorb}, one
per section. Once some non--identity $\QQ$ preserves the function, the
representation is identified only up to $\QQ$: it is an equivalence class, not a
fixed matrix (\cref{def:equiv}). Matching absolute coordinates is then ill--posed
(\cref{prop:illposed}), so only class invariants are admissible targets
(\cref{sec:invariants}). The output function is invariant under the joint action,
hence a function on the quotient (\cref{prop:outinv,prop:quotient}). And logit
distillation is singled out as the objective already defined there
(\cref{sec:objectives}).

\begin{definition}[Representation equivalence class]
\label{def:equiv}
For $\HH \in \R^{N\times d}$ define
\begin{equation}
\label{eq:class}
[\HH] \;=\; \bigl\{\, c\,\HH \QQ \;:\; \QQ \in \Orth(d),\;
c \in \R_{+} \,\bigr\},
\end{equation}
the orbit of $\HH$ under right multiplication by orthogonal matrices and
\emph{isotropic} positive scaling. We restrict to isotropic scaling
deliberately: the invariants we use below (Gram structure, CKA) are invariant
to $\Orth(d)$ and to isotropic scaling, but \emph{not} to a non--isotropic
diagonal $\Lam$, since $\HH\Lam$ changes $\HH\HH^\top$. The relevant
identifiability group for these invariants is therefore $\Orth(d)\times\R_+$,
and we define $[\HH]$ to match it exactly.
\end{definition}

\paragraph{Student and teacher representations.}
The distillation setting compares two representations of the \emph{same} probe
set $\Probe$: the teacher's $\HH_T=\HH_T(\Probe)\in\R^{N\times d_T}$ and the
student's $\HH_S=\HH_S(\Probe)\in\R^{N\times d_S}$, each a chosen layer's
representation evaluated on the common probes. Two conditions are needed for an
absolute feature--matching loss $\norm{\HH_T-\HH_S}_F$ to be even well--posed as
a formula, and they are worth stating because they are exactly what the
equivalence--class view problematizes. First, the two must be evaluated on the
\emph{same} $\Probe$, so the rows correspond (a representation is only defined
relative to its probe set, \cref{sec:unident}); we assume this throughout.
Second, they must share dimension, $d_S=d_T=d$, so that the difference lives in a
single space $\R^{N\times d}$ and an orthogonal $\QQ\in\Orth(d)$ can act on
either. We take $d_S=d_T=d$ for \cref{prop:illposed} and the relational
constructions of \cref{sec:invariants}; the unequal--dimension case
$d_S\neq d_T$, where no common $\R^{N\times d}$ exists and a learned map must
bridge the two, is exactly what the alignment of \cref{sec:align}
(\cref{rem:unequal}) handles. Even with both conditions met, the next
proposition shows the loss is still ill--posed---not for lack of a shared space,
but because that space carries no absolute basis.

\begin{proposition}[Ill--posedness of absolute feature matching]
\label{prop:illposed}
Let $\HH_T,\HH_S\in\R^{N\times d}$ be teacher and student representations of a
common probe set $\Probe$ (same dimension $d$, as above), and consider the
feature--matching loss $\Labs(\HH_S) = \norm{\HH_T - \HH_S}_F^2$. Under
\cref{ass:absorb}, $\Labs$ is not constant on the student equivalence class
$[\HH_S]$: there exist $\QQ \in \Orth(d)$ with
$\Labs(\HH_S \QQ) \neq \Labs(\HH_S)$ while $\HH_S\QQ$, once completed by the
corresponding downstream reparametrization $\Win \mapsto \QQ^\top\Win$
(\cref{ass:absorb}), induces the same network function as $\HH_S$ with $\Win$.
Consequently $\Labs$ assigns different values to representations that are
functionally equivalent once their readers are accounted for, and its minimizer
is an artifact of the arbitrary representative chosen for $[\HH_S]$.
\end{proposition}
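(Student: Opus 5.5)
The plan is to expand the squared Frobenius loss and read off how it depends on the choice of representative in $[\HH_S]$. The functional-equivalence half of the statement needs no new work. It is \cref{ass:absorb} applied row by row: each row $h_\ell(x_i)$ of $\HH_S$ satisfies the absorbing identity \cref{eq:absorb} with $\Win\mapsto\QQ^\top\Win$. Stacking over $\Probe$ shows that $(\HH_S\QQ,\QQ^\top\Win)$ and $(\HH_S,\Win)$ induce the same network function, since every downstream computation sees only the product $h\Win$. The real content is therefore to show that $\Labs$ is \emph{not} constant along the orbit.

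For $\QQ\in\Orth(d)$ I use $\norm{\HH_S\QQ}_F=\norm{\HH_S}_F$ to write
\[
\Labs(\HH_S\QQ)=\norm{\HH_T}_F^2+\norm{\HH_S}_F^2-2\,\tr\bigl(\MM\QQ\bigr),
\qquad \MM:=\HH_T^\top\HH_S\in\R^{d\times d}.
\]
Thus $\Labs$ varies on the $\Orth(d)$-orbit exactly when the linear functional $\QQ\mapsto\tr(\MM\QQ)$ is nonconstant on $\Orth(d)$. The first concrete choice is to compare $\QQ=\bm{I}$ with $\QQ=-\bm{I}$, which gives values $\mp 2\tr(\MM)$ shifted by the same constant; these differ whenever $\tr(\MM)\neq 0$. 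For the general case I take the SVD $\MM=\bm{U}\bm{\Sigma}\bm{V}^\top$. Over $\Orth(d)$, $\tr(\MM\QQ)$ ranges over $[-\norm{\MM}_*,\norm{\MM}_*]$: the maximum is attained at the Procrustes rotation $\QQ^\star=\bm{V}\bm{U}^\top$ and the minimum at $-\QQ^\star$. So whenever $\MM\neq 0$ the functional takes at least two distinct values, and a witnessing $\QQ$ exists.

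The main obstacle is the degenerate case $\MM=\HH_T^\top\HH_S=0$, for example when the column spaces of $\HH_T$ and $\HH_S$ are mutually orthogonal. There the loss is genuinely constant on the pure $\Orth(d)$-orbit, so the statement as worded needs either a nondegeneracy hypothesis $\HH_T^\top\HH_S\neq0$ or an appeal to the full group $\Grp=\Orth(d)\times\R_+$. I would handle it with the scaling part of \cref{ass:absorb}: for $c>0$,
\[
\Labs(c\,\HH_S)=\norm{\HH_T}_F^2+c^2\norm{\HH_S}_F^2-2c\,\tr(\MM),
\]
which is nonconstant in $c$ whenever $\HH_S\neq0$. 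The pair $(c\,\HH_S,\,c^{-1}\Win)$ is again functionally equivalent to $(\HH_S,\Win)$. The only fully excluded case is $\HH_S=0$, where the orbit is a single point and the claim is vacuous; I would state that exclusion explicitly.

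Finally, for the ``artifact'' conclusion I would note that the value of $\Labs$ on the class, and the optimal alignment $\QQ^\star=\bm{V}\bm{U}^\top$, depend on which representative is fixed. Replacing $\HH_S$ by $\HH_S\bm{R}$ replaces $\QQ^\star$ by $\bm{R}^\top\QQ^\star$ and changes $\Labs$ at the identity. Gradient descent on $\Labs$ therefore drives the student toward a particular point of $[\HH_S]$ determined by the teacher's arbitrary frame, not by anything the function distinguishes.
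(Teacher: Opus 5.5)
Your proposal is correct and follows the paper's proof. It uses the same expansion isolating $-2\tr(\QQ^\top\HH_S^\top\HH_T)$, the same SVD argument that this term sweeps $[-\sum_i\sigma_i,\sum_i\sigma_i]$ over $\Orth(d)$, and takes functional equivalence directly from \cref{ass:absorb}; where the paper simply assumes $\Sigma\neq0$ (``generic teacher and student''), you rightly note that the rotation-only existence claim fails when $\HH_T^\top\HH_S=0$ and recover non-constancy on $[\HH_S]$ through the $\R_+$ scaling, which tightens the argument without changing its route.
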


\begin{proof}
Functional invariance of $\HH_S \QQ$ is immediate from \cref{ass:absorb}.
For non--constancy, expand
\[
\Labs(\HH_S\QQ) = \norm{\HH_T}_F^2 + \norm{\HH_S}_F^2
- 2\,\tr\!\bigl(\QQ^\top \HH_S^\top \HH_T\bigr).
\]
Only the last term depends on $\QQ$. Let $\HH_S^\top \HH_T = U\Sigma V^\top$
be a singular value decomposition with $\Sigma \neq 0$ (generic teacher and
student). The map $\QQ \mapsto \tr(\QQ^\top U \Sigma V^\top)$ ranges over
$[-\sum_i \sigma_i, \sum_i \sigma_i]$ as $\QQ$ varies over $\Orth(d)$, taking
its maximum at $\QQ = U V^\top$ (so that $\QQ^\top U\Sigma V^\top = V\Sigma
V^\top \succeq 0$). Hence it is non--constant, so $\Labs$ is
non--constant on $[\HH_S]$.
\end{proof}

\paragraph{The proposition, measured.}
\Cref{prop:illposed} can be turned into a direct measurement. Take a real
teacher representation $\HH$ (Qwen2.5--0.5B, layer $12$, $d=896$) and apply an
orthogonal reparametrization $\HH\mapsto\HH\QQ$, interpolating $\QQ$ from the
identity to a random rotation. The rotation is function--preserving in the
sense of \cref{prop:outinv}: a compensating downstream reparametrization
$\phi^{\QQ}$ exists for which $f^{\QQ}=f$. The table evaluates the
distillation losses only on the two \emph{representatives} $\HH$ and $\HH\QQ$,
which is exactly what a representational objective sees. We compare three
losses on the pair $(\HH,\HH\QQ)$: absolute
feature matching $\norm{\HH-\HH\QQ}_F^2$, the CKA loss $1-\CKA$, and the
normalized--Gram loss $L_{\mathrm{rel}}$. The two representatives are the
\emph{same information in rotated coordinates}, yet \cref{tab:rotation} shows
the feature--matching loss growing without bound (to $\sim\!7\times10^5$ at a
full rotation) while $1-\CKA$ and $L_{\mathrm{rel}}$ stay at $0$ to machine
precision for every rotation. Absolute matching penalizes a change that alters
nothing about the function; the basis--invariant objectives are exactly blind
to it. This is \cref{prop:illposed} as an experimental fact; its
\emph{training--level} counterpart appears in \cref{sec:experiments}, where
optimizing $L_{\mathrm{cka}}$ to $\CKA\!\to\!1$ still leaves the function
destroyed (\cref{tab:ablation})---\cref{tab:rotation} is the static form, the
ablation the dynamic one.

This also settles a natural worry: is ``$\CKA\!\approx\!1$ yet capability zero''
an artifact of CKA specifically---a known insensitivity of CKA to
small--eigenvalue directions---rather than a fact about representation matching?
It is not. The failure is not a property of CKA as a \emph{measure} but of
basis invariance as a \emph{class} of objective. \Cref{prop:illposed} is proved
for any loss that is constant on $[\HH_S]$, and \cref{tab:rotation} exhibits the
same blindness for $L_{\mathrm{rel}}$ (a normalized--Gram, kernel--free
objective) as for CKA: both stay at $0$ under a full function--preserving
rotation. Any measure fine enough to distinguish the two representatives would,
by definition, no longer be basis--invariant---and would then penalize the very
rotations that leave the function unchanged, i.e.\ behave like absolute matching
and fail for the opposite reason. The gap between $\CKA\!\to\!1$ and capability
is therefore not a resolution limit of one similarity index but the structural
fact that the output--selecting representative is quotiented away by every
basis--invariant objective.

\begin{table}[t]
\centering
\caption{\Cref{prop:illposed}, measured on Qwen2.5--0.5B (layer 12, $d=896$).
A function--preserving rotation $\QQ$ is interpolated from identity ($t=0$) to
a random orthogonal matrix ($t=1$). Absolute feature matching diverges; the
basis--invariant losses are identically zero. The feature--matching column
reports $\Labs$ of \cref{prop:illposed} normalized per probe ($\,/N$).}
\label{tab:rotation}
\begin{tabular}{l ccc}
\toprule
rotation $t$ & $\norm{\HH-\HH\QQ}_F^2/N$ & $1-\CKA$ & $L_{\mathrm{rel}}$\\
\midrule
0.00 & $0$               & $0$ & $0$\\
0.10 & $1.1\times10^4$   & $0$ & $0$\\
0.25 & $6.9\times10^4$   & $0$ & $0$\\
0.50 & $2.5\times10^5$   & $0$ & $0$\\
0.75 & $4.9\times10^5$   & $0$ & $0$\\
1.00 & $7.0\times10^5$   & $0$ & $0$\\
\bottomrule
\end{tabular}
\end{table}

\section{Admissible Targets: Equivalence--Class Invariants}
\label{sec:invariants}

Before exhibiting specific quantities, we say precisely what it means for one to
be an invariant.

\begin{definition}[Class invariant]
\label{def:invariant}
A map $q$ defined on representations is a \emph{($\Grp$--)invariant} if it is
constant on each equivalence class, i.e.
\begin{equation}
\label{eq:invariant}
q(c\,\HH\QQ)=q(\HH)\qquad\text{for all }\QQ\in\Orth(d),\ c\in\R_+,
\end{equation}
equivalently if $q$ factors through the quotient onto
$\R^{N\times d}/\Grp$ (the representation quotient defined in
\cref{eq:pirep} below) as a well--defined function of the class. A function of
\emph{two} representations is an invariant
if it is constant under the action applied to either argument separately. An
invariant is thus a well--defined function of the class $[\HH]$, not of the
representative; it is exactly the kind of quantity a distillation objective may
legitimately target, since by \cref{prop:illposed} anything not of this form
depends on the arbitrary basis.
\end{definition}

We now exhibit three invariants: the Gram structure, $\CKA$, and the
principal--angle spectrum.

The first invariant is relational. The (uncentered) \emph{Gram matrix} is
$\GG = \HH \HH^\top \in \R^{N\times N}$, with centered version
$\widehat{\GG} = \bm{C}\GG\bm{C}$ where $\bm{C} = \bm{I} - \tfrac1N
\bm{1}\bm{1}^\top$ ($\bm{1}\in\R^{N}$ the all--ones vector, $\bm{I}$ the
$N\times N$ identity). It records inter--sample geometry, and is orthogonally
invariant:

\begin{lemma}[Orthogonal invariance of the Gram structure]
\label{lem:gram}
For all $\QQ \in \Orth(d)$, $(\HH\QQ)(\HH\QQ)^\top = \HH\HH^\top$. Thus $\GG$
and $\widehat{\GG}$ are invariant under the orthogonal part of $[\HH]$.
\end{lemma}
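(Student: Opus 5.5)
The plan is a direct computation from the defining property of $\Orth(d)$, followed by a one-line extension to the centered Gram matrix. First I will expand the product using the transpose rule $(\bm{A}\bm{B})^\top=\bm{B}^\top\bm{A}^\top$, which gives $(\HH\QQ)(\HH\QQ)^\top=\HH\QQ\QQ^\top\HH^\top$. I will then invoke $\QQ\QQ^\top=\bm{I}$, which is part of the definition of $\Orth(d)$ fixed in \cref{sec:unident}, to collapse the middle factor to the $d\times d$ identity. What remains is $\HH\HH^\top=\GG$. This is the same cancellation that underlies the absorbing identity \cref{eq:absorb}, now applied on the Gram side rather than against the reader $\Win$.

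Second, for the centered version I will note that $\bm{C}=\bm{I}-\tfrac1N\bm{1}\bm{1}^\top$ acts on the sample (row) index of $\GG$, not on the feature index. It therefore does not involve $\QQ$ at all. So $\widehat{\GG}(\HH\QQ)=\bm{C}\,(\HH\QQ)(\HH\QQ)^\top\bm{C}=\bm{C}\GG\bm{C}=\widehat{\GG}(\HH)$ follows immediately from the first step. Equivalently, centering commutes with right multiplication, since $\bm{C}(\HH\QQ)=(\bm{C}\HH)\QQ$, so the first step applies verbatim to the centered features $\bm{C}\HH$.

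I will also add a remark on the scale part of $\Grp$, to be precise about what the lemma claims. Under $c\,\HH\QQ$ we get $\GG\mapsto c^2\GG$, so $\GG$ itself is invariant only under the orthogonal part. This is why the statement says ``orthogonal part of $[\HH]$''. Full $\Grp$-invariance in the sense of \cref{def:invariant} needs a normalization, such as dividing by $\norm{\GG}_F$ or forming $\CKA$, and I will defer that to the subsequent invariants. I expect no real obstacle here. The only point that needs care is correctly identifying which side each operator acts on, with $\QQ$ on the feature side and $\bm{C}$ on the sample side, so that the two commute.
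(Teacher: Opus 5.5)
Your proposal is correct and follows the paper's proof, which is the same one-line cancellation $(\HH\QQ)(\HH\QQ)^\top=\HH\QQ\QQ^\top\HH^\top=\HH\HH^\top$ using $\QQ\QQ^\top=\bm{I}$. Your explicit step for the centered matrix, $\bm{C}(\HH\QQ)(\HH\QQ)^\top\bm{C}=\bm{C}\GG\bm{C}$, and your remark that $\GG\mapsto c^2\GG$ under scaling spell out what the paper leaves implicit, without changing the argument.
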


\begin{proof}
$(\HH\QQ)(\HH\QQ)^\top = \HH\QQ\QQ^\top\HH^\top = \HH\HH^\top$ since
$\QQ\QQ^\top = \bm{I}$.
\end{proof}

The Gram matrix records inter--sample geometry (distances and angles), which
is exactly the information the representation carries about the probe set,
stripped of the arbitrary coordinate frame.

\begin{definition}[Centered Kernel Alignment]
\label{def:cka}
For representations $\HH_A,\HH_B$ with centered Gram matrices
$\widehat{\GG}_A,\widehat{\GG}_B$, and writing
$\inner{A}{B}_F=\tr(A^\top B)$ for the Frobenius inner product (with
$\norm{A}_F=\inner{A}{A}_F^{1/2}$),
\begin{equation}
\label{eq:cka}
\CKA(\HH_A,\HH_B) \;=\;
\frac{\inner{\widehat{\GG}_A}{\widehat{\GG}_B}_F}
{\norm{\widehat{\GG}_A}_F\,\norm{\widehat{\GG}_B}_F}\;\in[0,1].
\end{equation}
Linear CKA was introduced by \citet{kornblith2019} as a representation
similarity index invariant to orthogonal transformation and isotropic
scaling, building on kernel--target alignment~\citep{cristianini2001} and
centered alignment~\citep{cortes2012}; it is a normalized
Hilbert--Schmidt independence criterion~\citep{gretton2005}.
\end{definition}

\begin{corollary}
$\CKA$ is invariant under orthogonal transformation and isotropic scaling of
either argument, and therefore is a well--defined similarity on equivalence
classes $[\HH_A],[\HH_B]$.
\end{corollary}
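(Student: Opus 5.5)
The plan is to reduce the corollary to \cref{lem:gram} together with a scaling computation, and then read off well--definedness on classes from \cref{def:invariant}. Replace $\HH_A$ by $c\,\HH_A\QQ$ with $\QQ\in\Orth(d)$ and $c\in\R_+$; the case of the second argument follows by symmetry of \cref{eq:cka}. By \cref{lem:gram} the uncentered Gram matrix transforms as $(c\HH_A\QQ)(c\HH_A\QQ)^\top=c^2\HH_A\QQ\QQ^\top\HH_A^\top=c^2\GG_A$. Because centering is the fixed linear map $\GG\mapsto\bm{C}\GG\bm{C}$, which does not depend on $\HH$, it follows that $\widehat{\GG}_A\mapsto c^2\widehat{\GG}_A$.

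Next I substitute into \cref{eq:cka}. The numerator $\inner{\widehat{\GG}_A}{\widehat{\GG}_B}_F$ is linear in $\widehat{\GG}_A$, so it picks up a factor $c^2$. The denominator factor $\norm{\widehat{\GG}_A}_F$ picks up $|c^2|=c^2$, since $c>0$. The factor $c^2$ therefore cancels and $\CKA$ is unchanged. This is the only place where positivity and isotropy of the scale are used. A non--isotropic $\Lam$ would not produce a scalar multiple of $\GG_A$, which is consistent with the remark after \cref{def:equiv}.

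For the conclusion, any two representatives of $[\HH_A]$ differ by some $(\QQ,c)\in\Grp$, and likewise for $[\HH_B]$. Applying the invariance above once in each argument shows that $\CKA(\HH_A,\HH_B)$ depends only on the pair of classes. This is exactly the two--argument invariance of \cref{def:invariant}, so $\CKA$ descends to a function on (representation quotient)$\times$(representation quotient).

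The main obstacle is not the invariance, which is a two--line computation, but a degenerate case: \cref{eq:cka} is undefined when $\widehat{\GG}_A=0$, i.e.\ when every probe has the same representation. I would note that this condition is itself class--invariant, since $\widehat{\GG}_A=0$ if and only if $c^2\widehat{\GG}_A=0$. The exceptional set is therefore a union of whole classes, and the statement holds on its complement, or on all classes under any fixed convention for $0/0$. If the stated range $[0,1]$ is also to be justified, it follows from two facts. Nonnegativity holds because both centered Gram matrices are positive semidefinite, so their Frobenius inner product is $\tr(\widehat{\GG}_A\widehat{\GG}_B)\ge 0$. The upper bound $1$ follows from Cauchy--Schwarz. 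Neither fact is needed for the invariance claim itself.
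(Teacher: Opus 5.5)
Your proposal is correct and follows the paper's own argument: orthogonal invariance via \cref{lem:gram}, and the factor $c^2$ on $\widehat{\GG}$ cancelling in the normalized ratio. Your additional points are all sound and go slightly beyond what the paper writes: the descent to a function on pairs of classes, the observation that the degenerate case $\widehat{\GG}_A=0$ is itself class--invariant, and the justification of the range $[0,1]$.
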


\begin{proof}
Orthogonal invariance follows from \cref{lem:gram}. Isotropic scaling
$\HH \mapsto c\HH$ scales $\widehat{\GG}$ by $c^2$, which cancels in the
normalized ratio.
\end{proof}

A third invariant measures shared subspace. If $\HH_A,\HH_B$ have orthonormal
column bases $\bm{U}_A,\bm{U}_B$ and $k=\min(\rank\HH_A,\rank\HH_B)$, the
\emph{principal angles} $\theta_1\le\dots\le\theta_k$ satisfy
$\cos\theta_i = \sigma_i(\bm{U}_A^\top \bm{U}_B)$ (the $i$th singular value of
$\bm{U}_A^\top\bm{U}_B$), and the \emph{effective shared dimension} is
$k_{\mathrm{eff}} = \sum_{i=1}^{k} \cos^2\theta_i$. The values
$\{\cos^2\theta_i\}$ quantify how much of the two feature dictionaries point in
common directions; they will drive the graft predictions in \cref{sec:graft}.

\paragraph{The output function is invariant under the joint class action.}
The invariants above (Gram, CKA, principal angles) are properties of the
representation. We now make explicit the link the rest of the paper relies on,
and to do so we name two distinct operations that the discussion repeatedly
contrasts.
\begin{itemize}[leftmargin=1.4em,itemsep=2pt,topsep=3pt]
\item A \emph{rotation (alone)} acts on the representation only:
$h\mapsto h\QQ$ with the downstream parameters $\theta$ left unchanged. This
\emph{does} change the network function in general---feeding $h\QQ$ into the
unchanged downstream produces different logits (\cref{rem:joint} below;
empirically $\norm{f_{\text{rot}}-f}\neq0$, \cref{tab:mlp}).
\item A \emph{joint reparametrization} acts on the representation \emph{and} the
downstream together: for $(\QQ,c)\in\Grp=\Orth(d)\times\R_+$,
$(h,\theta)\mapsto(c\,h\QQ,\ \theta^{\QQ,c})$, where $\theta^{\QQ,c}$ is the
absorbing change $\Win\mapsto c^{-1}\QQ^\top\Win$ of \cref{ass:absorb} (the
rotation $\QQ$ is undone by $\QQ^\top$ and the scale $c$ by $c^{-1}$). By
\cref{eq:absorb} this leaves the network function unchanged: $f^{\QQ,c}=f$.
\end{itemize}
The point of \cref{prop:outinv} is that the output function is invariant under
the \emph{joint} reparametrization, and \emph{not} under rotation alone. A
reparametrization of the representation does not act on $\HH_\ell$ in isolation:
under \cref{ass:absorb} the downstream blocks are reparametrized in
compensation, and it is this joint action---on the representation \emph{and} the
downstream parameters---that the function is constant under. This is exactly why
two networks differing by such an action implement the same function, and why
capability transfers along the class; it is also why a representational
objective, which sees only $h$ and can move it by a rotation alone, is not by
itself a capability objective.

\begin{proposition}[Output invariance under the joint class action]
\label{prop:outinv}
Write the network function as
$f(x)=\phi_{\ell{:}L}\!\bigl(h_\ell(x)\bigr)$, with $\phi_{\ell{:}L}:\R^d\to\R^V$
the map from the layer--$\ell$ representation $h_\ell(x)\in\R^d$ to the output
logits as in \cref{ass:absorb} (it carries the remaining blocks and the
unembedding). For $\QQ\in\Orth(d)$, let $\phi_{\ell{:}L}^{\QQ}$ denote
$\phi_{\ell{:}L}$ with its input operator $\Win$ reparametrized to absorb $\QQ$
(the existence of such a $\phi_{\ell{:}L}^{\QQ}$ is \cref{ass:absorb}). Then for
all $x$,
\begin{equation}
\label{eq:propinv}
\phi_{\ell{:}L}^{\QQ}\!\bigl(h_\ell(x)\,\QQ\bigr)=\phi_{\ell{:}L}\!\bigl(h_\ell(x)\bigr),
\qquad\text{hence}\qquad f^{\QQ}(x)=f(x),
\end{equation}
where $f^{\QQ}$ is the network with representation $h_\ell\QQ$ and downstream
$\phi_{\ell{:}L}^{\QQ}$. Thus the pair $(h_\ell,\ \theta)$, with $\theta$
the downstream parameters,
transforms under a joint equivalence action, and $f$ is invariant under it: the
output function---and any capability metric computed from the logits
(perplexity, top--1 agreement)---is constant along the orbit
$\{(c\,h_\ell\QQ,\ \phi_{\ell{:}L}^{\QQ,c}):(\QQ,c)\in\Grp\}$. The invariant
object is the orbit $[(h_\ell,\theta)]$; the pair $(h_\ell,\theta)$ is what the
action moves.
\end{proposition}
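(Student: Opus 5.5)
The proof is a direct unpacking of \cref{ass:absorb}, applied pointwise in $x$ and then lifted to the orbit statement. Fix $x$ and write $h=h_\ell(x)\in\R^{1\times d}$. By \cref{ass:absorb}, $\phi_{\ell{:}L}=\rho\circ\psi$ with $\psi(h)=\tilde\psi(h\Win)$, and any intervening normalization commutes with $\QQ$. So I would first record that $\phi^{\QQ,c}_{\ell{:}L}$ is, by definition, $\rho\circ\psi^{\QQ,c}$ with $\psi^{\QQ,c}(h')=\tilde\psi\bigl(h'\,c^{-1}\QQ^\top\Win\bigr)$. Here $\rho$ and $\tilde\psi$ are left untouched, since only the input operator is reparametrized.

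The core computation is then the cancellation (\ref{eq:absorb}a):
\[
(c\,h\QQ)(c^{-1}\QQ^\top\Win)=h\,\QQ\QQ^\top\Win=h\Win .
\]
This uses $\QQ\QQ^\top=\bm{I}$ from the definition of $\Orth(d)$ and $c\,c^{-1}=1$. Applying $\tilde\psi$ and then $\rho$ to both sides gives $\phi^{\QQ,c}_{\ell{:}L}(c\,h\QQ)=\phi_{\ell{:}L}(h)$. Setting $c=1$ yields \cref{eq:propinv}. Since $f(x)=\phi_{\ell{:}L}(h_\ell(x))$ and $f^{\QQ}(x)$ is by definition $\phi^{\QQ}_{\ell{:}L}(h_\ell(x)\QQ)$, we get $f^{\QQ}(x)=f(x)$ for every $x$, so $f^{\QQ}=f$ as functions.

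Next I would lift this to the orbit claim. The map $(\QQ,c)\cdot(h,\theta)=(c\,h\QQ,\theta^{\QQ,c})$ is a group action of $\Grp$: composing $(\QQ_1,c_1)$ then $(\QQ_2,c_2)$ sends $\Win$ to $c_2^{-1}\QQ_2^\top c_1^{-1}\QQ_1^\top\Win=(c_1c_2)^{-1}(\QQ_1\QQ_2)^\top\Win$, which matches the action of $(\QQ_1\QQ_2,c_1c_2)$ on $h$. Because $f$ is unchanged by each single application, it is constant on the whole orbit $[(h_\ell,\theta)]$. Any capability metric (perplexity, top--1 agreement) is a function of the logits $f(x)$ over the evaluation inputs, so it is constant on the orbit as well.

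The main obstacle is not the algebra but making the normalization clause precise. If a normalization $\nu$ sits between $h$ and $\Win$, the cancellation needs $\nu(c\,h\QQ)=\nu(h)\QQ$, up to a scale factor absorbed into $c$. Plain RMSNorm is scale-invariant and commutes with orthogonal $\QQ$, so the argument goes through with $c$ disappearing entirely. A learned gain $\bm\gamma$ requires restricting to the subgroup $\Grp_0$ of $\QQ$ that commute with $\operatorname{diag}(\bm\gamma)$, or folding $\bm\gamma$ into $\Win$ first. I would handle this by stating the argument for $\Grp_0$, as licensed by \cref{app:status}, and noting that the identity is then exact on that subgroup.
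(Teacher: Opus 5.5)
Your proposal is correct and follows the paper's proof. The paper's argument is simply the cancellation (\ref{eq:absorb}a), $(h_\ell\QQ)(\QQ^\top\Win)=h_\ell\Win$, pushed through $\tilde\psi$ and $\rho$ to give $\phi^{\QQ}_{\ell{:}L}(h_\ell\QQ)=\phi_{\ell{:}L}(h_\ell)$, and then constancy of $f$ on the joint orbit. Your additions are sound refinements of that same argument, not a different route: you carry the scale $c$ explicitly (the paper's proof writes only the $c=1$ case), you check the composition law of the joint action, and you observe that a scale-invariant normalization such as plain RMSNorm already absorbs $c$, so the compensating map there is $\QQ^\top\Win$ rather than $c^{-1}\QQ^\top\Win$.
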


\begin{proof}
This is precisely \cref{eq:absorb}. The substitution $\Win\mapsto\QQ^\top\Win$
gives the cancellation~(\ref{eq:absorb}a),
$\tilde\psi\bigl((h_\ell\QQ)(\QQ^\top\Win)\bigr)=\tilde\psi(h_\ell\Win)$, so
every downstream activation, and thus the logits, are unchanged; hence
the consequence~(\ref{eq:absorb}b),
$\phi_{\ell{:}L}^{\QQ}(h_\ell\QQ)=\phi_{\ell{:}L}(h_\ell)$, and $f^{\QQ}=f$.
Invariance for all $\QQ\in\Orth(d)$ means $f$ is constant on the joint orbit.
\end{proof}

\begin{remark}
\label{rem:joint}
The invariance is \emph{joint}: it is not true that feeding $h_\ell\QQ$ into the
\emph{unchanged} $\phi_{\ell{:}L}$ preserves the output---that would generally
change $f$. What is preserved is the function realized by the
jointly--reparametrized network. Equivalently, the output function is a
well--defined function on the quotient by the joint action, not on $[h_\ell]$
alone. This is the precise sense in which capability is a class invariant.
An analogy: reaching the representation quotient is like walking into the right
meeting room (the correct geometric structure); reaching the joint quotient is
like sitting in the exact seat assigned to you (the coordinates the output head
can read). $\CKA$ gets you into the room, but it does not tell you which seat
is yours---which is why aligning the representation is not the same as restoring
the function.
\end{remark}

\begin{remark}[Orbit and fiber, in programmers' terms]
\label{rem:csorbit}
Two words organize what follows, and both have exact programming analogues. An
\emph{orbit} is all implementations of one program: rewriting the two--layer
network's internals by any $(\QQ,c)$---$h\mapsto c\,h\QQ$ with the reader updated
to $c^{-1}\QQ^\top\WW_2$---gives different source code computing the same output,
and the set of all such rewrites is the orbit of $(h,\WW_2)$. A \emph{fiber} is
everything mapping to one behavior: for a map $\pi$ and a fixed output $y$, the
fiber $\pi^{-1}(y)$ collects all inputs $\pi$ sends to $y$. Here $\pi$ is the
joint quotient map (defined next), and its fiber over a class is exactly one
orbit---the clones computing the same function. The two words name the same
subset; \cref{sec:code-appendix} realizes both as runnable code. Everything that
follows is bookkeeping for these sets: a legitimate distillation target may depend
on \emph{which} fiber the student lands in, never on \emph{where inside} it sits.
\end{remark}

\paragraph{The chain as a factorization through the quotient.}
\Cref{prop:outinv} is best read through quotients by the class action, of which
there are two, and distinguishing them is what makes the chain precise. Let
$\Grp=\Orth(d)\times\R_+$. It acts on representations alone by $h\mapsto c\,hQ$.
The \emph{quotient space} $\R^{N\times d}/\Grp$ is the set of equivalence classes
under this action, $\R^{N\times d}/\Grp=\{\,[h]:h\in\R^{N\times d}\,\}$ with
$[h]$ as in \cref{eq:class} (we use only the elementary theory of group actions
and quotient spaces; see, e.g., \citealp{lee2013}); collapsing each orbit to a
single point yields the \emph{representation quotient map}
\begin{equation}
\label{eq:pirep}
\pi_{\mathrm{rep}}:\ \R^{N\times d}\longrightarrow \R^{N\times d}/\Grp,
\qquad \pi_{\mathrm{rep}}(h)=[h],
\end{equation}
on which the invariants of \cref{sec:invariants} ($\CKA$, normalized Gram,
principal angles) are functions. But the network function does \emph{not} live
on $\R^{N\times d}/\Grp$: by \cref{rem:joint} feeding $h\QQ$ into the unchanged
downstream changes $f$. The function lives on the \emph{joint} quotient. Writing
$\Theta$ for the downstream parameters and letting $\Grp$ act jointly, for
$g=(\QQ,c)\in\Grp$,
$g\cdot(h,\theta)=(c\,h\QQ,\ \theta^{\QQ,c})$ with $\theta^{\QQ,c}$ the absorbing
reparametrization $\Win\mapsto c^{-1}\QQ^\top\Win$ of \cref{ass:absorb}, define
the \emph{joint space}
$\MM=\R^{N\times d}\times\Theta$ and its quotient $\MM/\Grp=\{\,[(h,\theta)]:
(h,\theta)\in\MM\,\}$, the set of joint orbits, with quotient map
\[
\pi_{\mathrm{joint}}:\ \MM\longrightarrow \MM/\Grp,
\qquad \pi_{\mathrm{joint}}(h,\theta)=[(h,\theta)].
\]
The invariants of \cref{sec:invariants} live on $\pi_{\mathrm{rep}}$; the output
function, as the next proposition shows, lives on $\pi_{\mathrm{joint}}$.

\begin{figure}[h]
\centering
\includegraphics[width=\linewidth]{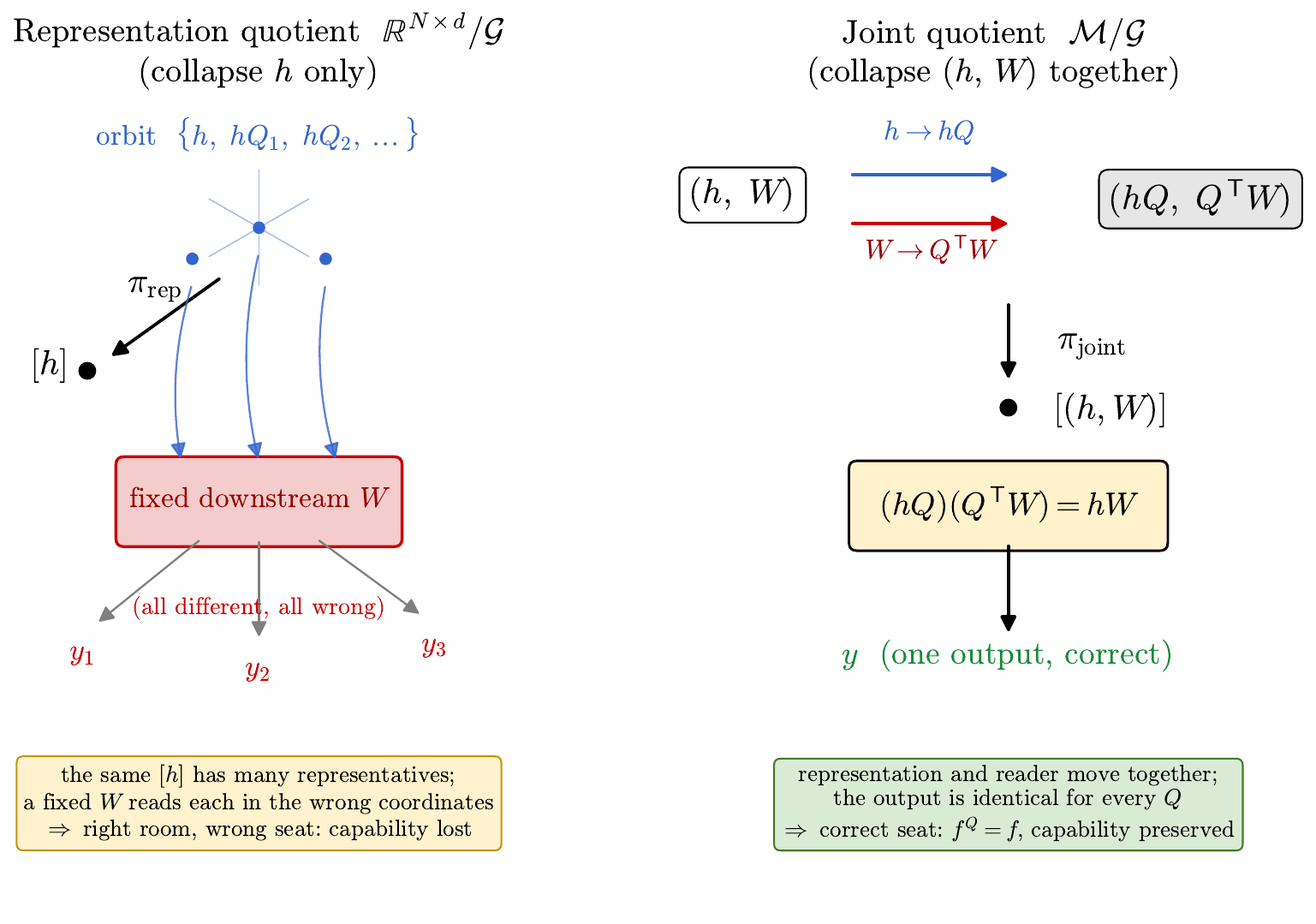}
\caption{Why capability lives on the joint quotient, not the representation
quotient (two--layer network of \cref{ex:quotient}). \emph{Left:} collapsing
$h$ alone to $[h]\in\R^{N\times d}/\Grp$ is what $\CKA$ does---but if the
downstream $W$ is held fixed, the rotated representation is read in the wrong
coordinates and the output is wrong, so matching $[h]$ does not restore
capability. \emph{Right:} the \emph{joint} action rotates $h$ and
counter--rotates $W$ together; the product $(h\QQ)(\QQ^\top W)=hW$ is unchanged,
so $(h,W)$ and its variants collapse to one point $[(h,W)]\in\MM/\Grp$ on which
the output---and hence capability---is constant ($f^{\QQ}=f$, \cref{eq:factor}).}
\label{fig:orbit}
\end{figure}

\begin{example}[Both quotients on a two--layer network]
\label{ex:quotient}
Concretely, on $x\to h\to y$ with $y=h\bm{W}_2$: rotating $h$ and
counter--rotating the reader, $(h\QQ)(\QQ^\top\bm{W}_2)=h\bm{W}_2=y$ (and
isotropic scale absorbed by $\bm{W}_2\mapsto c^{-1}\bm{W}_2$), leaves the output
fixed. So every $c\,h\QQ$ is the same point $[h]$ of the representation quotient
(\cref{fig:orbit}, left)---but $[h]$ alone forgets which $\bm{W}_2$ reads it
(\cref{rem:joint}). The pair $(h,\bm{W}_2)$ and its absorbable variants form one
point of the \emph{joint} quotient (\cref{fig:orbit}, right), and every network
there computes the same function. This is \cref{eq:factor} below.
\end{example}

\begin{proposition}[Factorization through the joint quotient]
\label{prop:quotient}
Intuitively, two networks differing only by an absorbable coordinate change are
treated as the same point, and the function reads only that point. Formally,
under \cref{ass:absorb} the output function descends to the joint quotient:
there is a map $\bar f$ on $\MM/\Grp$ with
\begin{equation}
\label{eq:factor}
f \;=\; \bar f\circ\pi_{\mathrm{joint}},
\qquad\text{i.e.}\qquad
f\ \text{depends on }(h_\ell,\theta)\ \text{only through }[(h_\ell,\theta)].
\end{equation}
Consequently any evaluation functional $\kappa$ computed from the logits
(perplexity, top--1 agreement, $\mathrm{KL}$ to a reference) factors as
$\kappa=\bar\kappa\circ\pi_{\mathrm{joint}}$: capability is constant on fibers
of $\pi_{\mathrm{joint}}$ (a \emph{fiber} is one orbit---all the clones that
compute the same function in different coordinates), hence a well--defined
function on the joint quotient.
It is \emph{not}, in general, a function on the representation quotient
$\R^{N\times d}/\Grp$ alone---that is exactly the content of \cref{rem:joint}.
\end{proposition}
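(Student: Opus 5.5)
The plan is to obtain \cref{prop:quotient} as the standard ``invariant maps descend to the quotient'' argument, with \cref{prop:outinv} supplying the invariance. Concretely, I regard $f$ as a map on the joint space $\MM=\R^{N\times d}\times\Theta$. Its value at $(h,\theta)$ is the stacked logits $\phi_{\ell{:}L}^{\theta}(h)$ over the probes, where $\phi_{\ell{:}L}^{\theta}$ is the downstream map with parameters $\theta$. The first step is to check that the joint rule $g\cdot(h,\theta)=(c\,h\QQ,\theta^{\QQ,c})$ is a genuine group action of $\Grp$ on $\MM$, so that its orbits partition $\MM$ and $\MM/\Grp$ is well defined. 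This reduces to the identity $(\theta^{\QQ_1,c_1})^{\QQ_2,c_2}=\theta^{\QQ_1\QQ_2,c_1c_2}$. That identity follows directly from composing $\Win\mapsto c_1^{-1}\QQ_1^\top\Win$ with $\Win\mapsto c_2^{-1}\QQ_2^\top\Win$, which gives $(c_1c_2)^{-1}(\QQ_1\QQ_2)^\top\Win$. The identity element clearly acts trivially.

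The second step is invariance: $f(g\cdot(h,\theta))=f(h,\theta)$ for every $g$. For the rotation part this is exactly \cref{eq:propinv} of \cref{prop:outinv}, applied row by row. The scale part is the cancellation $(c\,h\QQ)(c^{-1}\QQ^\top\Win)=h\Win$ of \cref{ass:absorb}. Since every downstream computation sees only the product $h\Win$, all subsequent activations and the logits are unchanged.

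The third step defines $\bar f([(h,\theta)]):=f(h,\theta)$. This is well defined because any two representatives of one orbit differ by some $g$, and step two then gives equal values. The identity $f=\bar f\circ\pi_{\mathrm{joint}}$ holds by construction. For a capability functional $\kappa=K\circ f$, where $K$ is perplexity, top--1, or KL to a fixed reference applied to the logits, set $\bar\kappa:=K\circ\bar f$. Then $\kappa=\bar\kappa\circ\pi_{\mathrm{joint}}$, and constancy on fibers follows because fibers of $\pi_{\mathrm{joint}}$ are precisely the orbits.

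The step I expect to need real care is the negative clause: that $f$ is not, in general, a function on $\R^{N\times d}/\Grp$. I would prove it with a counterexample on the two--layer network of \cref{ex:quotient}, holding $\theta=\bm{W}_2$ fixed. Pick $h$ with $h\bm{W}_2\neq0$ and a $\QQ\in\Orth(d)$ with $h\QQ\bm{W}_2\neq h\bm{W}_2$. Such a $\QQ$ exists generically: if $h\QQ\bm{W}_2=h\bm{W}_2$ for all $\QQ$, then since $\{h\QQ\}$ sweeps the sphere of radius $\norm{h}$, the linear map $\bm{W}_2$ would be constant on that sphere, forcing $\bm{W}_2=0$ when $h\neq0$. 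Then $h$ and $h\QQ$ have the same class $[h]$ but different outputs under the same reader, so no function of $[h]$ alone can reproduce $f$ (\cref{rem:joint}). The only subtlety is phrasing ``not in general'' as an existence statement, which this counterexample supplies.
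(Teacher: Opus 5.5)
Your proposal is correct and follows the paper's proof. Both arguments get fiber--constancy of $f$ from \cref{prop:outinv} and the absorbing identity, descend through $\pi_{\mathrm{joint}}$ by setting $\bar f([(h,\theta)]):=f(h,\theta)$, and then take $\bar\kappa:=K\circ\bar f$. Your two additions go slightly beyond the paper: it neither verifies that the joint rule is a genuine (right) action of $\Grp$ on $\MM$ nor proves the negative clause, deferring it to \cref{rem:joint}; both of your steps are sound, and your rotation counterexample supplies that missing step (for $N>1$, apply the sphere argument to a single nonzero row of $h$).
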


\begin{proof}
The joint action $g\cdot(h,\theta)=(c\,h\QQ,\theta^{\QQ,c})$ of
$\Grp=\Orth(d)\times\R_+$ partitions $\MM$ into orbits, which are exactly the
fibers of $\pi_{\mathrm{joint}}$. By \cref{prop:outinv}, for every $g\in\Grp$ the
output is unchanged, $f(g\cdot(h,\theta))=f(h,\theta)$ (the absorbing
identity~(\ref{eq:absorb}), an algebraic equality), so \emph{$f$ is constant on
each fiber}. A map constant on fibers descends uniquely through the surjection
$\pi_{\mathrm{joint}}$ (the universal property of the quotient): setting
$\bar f([(h,\theta)]):=f(h,\theta)$ is well--defined by fiber--constancy and is
the only $\bar f$ with $\bar f\circ\pi_{\mathrm{joint}}=f$, which is
\eqref{eq:factor}. Any evaluation functional $\kappa$ of the logits is a function
of $f$, so $\kappa=\bar\kappa\circ\pi_{\mathrm{joint}}$ inherits the same
factorization. Every step is an equality over the subgroup on which
\cref{ass:absorb} is exact (\cref{app:status}).
\end{proof}

We read \cref{eq:factor} with the input held fixed: for each $x$, the pair
$(h_\ell(x),\theta)$ determines the logits, and $\bar f$ sends the joint class
$[(h_\ell(x),\theta)]$ to $f(x)\in\R^V$. The probe set $\Probe$ enters through
the representation $h_\ell(\cdot)$, so $\bar f$ is a function of the class at
each input rather than of a single global orbit; the invariance is over
$(\QQ,c)\in\Grp$, not over $x$.

The factorization is what makes the four operations of the paper commensurable:
a distillation objective is \emph{admissible} exactly when it factors through
the appropriate quotient---constant on fibers, hence a function on the quotient.
\Cref{prop:illposed} is the statement that absolute feature matching does
\emph{not} factor through $\pi_{\mathrm{rep}}$ (it separates points of a single
fiber), which is why its minimizer is fiber--dependent and meaningless. The
invariants of \cref{sec:invariants} are coordinates on the representation
quotient $\R^{N\times d}/\Grp$; the alignment of \cref{sec:align}
(\cref{def:align}) is a choice of
representative within a fiber; and---the point developed next---logit
distillation lives on the joint quotient $\MM/\Grp$ by construction.

\Cref{prop:quotient} is the theoretical hinge of the paper, and it settles the
loss taxonomy of \cref{sec:objectives} in one stroke. Because capability is
determined by $f=\bar f\circ\pi_{\mathrm{joint}}$, a function on the joint
quotient, an objective can transfer capability only if it targets the
output--function invariant on that quotient---being defined on the quotient is
necessary (a loss that separates points of one fiber penalizes
function--preserving reparametrizations), though not sufficient (a constant loss
is defined on the quotient yet transfers nothing). This
immediately sorts the candidate losses into two kinds. A loss written on the raw
representation $\R^{N\times d}$ is \emph{not} on the quotient as written. To be
admissible it must be \emph{pushed down} onto it---made constant on fibers. That
is precisely the normalization turning $\norm{\GG_S-\GG_T}$ into the
class--invariant $L_{\mathrm{rel}}$ and $L_{\CKA}$. A loss written on the output
distribution, by contrast, is \emph{already} a function on the joint quotient:
there is no fiber left to quotient out. This is why logit distillation needs no
such engineering and is the most stable objective. Every distinction in
\cref{sec:objectives} is a reading of this one fact: hidden--state losses must be
lowered to the quotient, the logit loss is born there.

This is the bridge between the two halves of the paper. The representational
invariants ($\CKA$, Gram, angles) say \emph{when two representations are the
same class}; \cref{prop:outinv,prop:quotient} say the \emph{output function is a
function on the quotient}, not a property of a chosen representative. It is
why \cref{prop:illposed} (absolute matching is ill--posed) and the empirical
result that output--function matching---not representational matching---restores
capability (\cref{sec:experiments}) are two sides of one fact: capability lives
on the quotient, so an objective can recover it only by targeting the
output--function invariant there.

\section{Alignment Across the Equivalence Class}
\label{sec:align}

To match coordinates (not just relations), one must first move the student
to a representative of the teacher's class. We make ``alignment'' precise as an
operation on the equivalence--class structure of \cref{sec:unident}, before
giving the map that realizes it.

\begin{definition}[Alignment]
\label{def:align}
Let $\HH_A$ and $\HH_B$ be representations of a common probe set $\Probe$, with
$[\HH_B]=\{\,c\,\HH_B\QQ:\QQ\in\Orth(d),\,c\in\R_+\,\}$ its equivalence class
(\cref{eq:class}). An \emph{alignment of $\HH_B$ to $\HH_A$} is the choice of the
representative of $[\HH_B]$ closest to $\HH_A$, i.e.\ the element
\begin{equation}
\label{eq:aligndef}
\HH_B^{\,\star} \;=\; \argmin_{\HH'\in[\HH_B]} \norm{\HH_A - \HH'}_F
\;=\; c^\star\,\HH_B\QQ^\star ,
\end{equation}
together with the group element $(\QQ^\star,c^\star)\in\Grp$ that realizes it.
Equivalently, alignment is a section of the representation quotient map
$\pi_{\mathrm{rep}}$ over the fiber $\pi_{\mathrm{rep}}^{-1}([\HH_B])$: it selects
one point of the fiber (one coordinate frame) rather than collapsing the fiber
to the class. It is thus dual to the invariants of \cref{sec:invariants}, which
quotient the fiber out; alignment instead fixes a representative within it. The
attained value $\rho=\norm{\HH_A-\HH_B^{\,\star}}_F/\norm{\HH_A}_F$ is the
\emph{alignment residual}, a class invariant of the pair (constant under
separate reparametrizations of $\HH_A,\HH_B$) that measures how far the two
classes are from coinciding. When $d_A\ne d_B$ no single fiber contains both, and
$\QQ^\star$ is replaced by a learned low--capacity map $\WW$ (\cref{rem:unequal})
that plays the same role of selecting coordinates.
\end{definition}

\Cref{prop:proc} solves \cref{eq:aligndef} in closed form for the orthogonal
part of the group (the scale $c^\star$ is fixed separately by
$c^\star=\inner{\HH_A}{\HH_B\QQ^\star}/\norm{\HH_B}_F^2$).

\begin{proposition}[Orthogonal Procrustes alignment]
\label{prop:proc}
For $d_A=d_B=d$, the orthogonal rotation $\QQ^\star$ that aligns $\HH_B$ to
$\HH_A$ most closely is given in closed form by \eqref{eq:procrustes}: writing the
singular value decomposition $\HH_B^\top\HH_A=U\Sigma V^\top$, the minimizer is
$\QQ^\star=UV^\top$.
\begin{equation}
\label{eq:procrustes}
\QQ^\star = \argmin_{\QQ\in\Orth(d)} \norm{\HH_A - \HH_B\QQ}_F
= U V^\top,\qquad \HH_B^\top \HH_A = U\Sigma V^\top.
\end{equation}
The aligned residual
$\rho = \norm{\HH_A - \HH_B\QQ^\star}_F / \norm{\HH_A}_F$
is invariant under separate orthogonal reparametrizations of $\HH_A,\HH_B$ and
measures how far the two representations are from being the same information
in different coordinates. This is the orthogonal Procrustes
problem~\citep{schonemann1966}, used in representational similarity
analysis~\citep{kriegeskorte2008} to compare neural population codes.
\end{proposition}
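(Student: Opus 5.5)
The plan is to reduce the minimization to a linear trace maximization, exactly as in the proof of \cref{prop:illposed}, and then to solve that maximization with the singular value decomposition. Expanding the objective gives
\[
\norm{\HH_A-\HH_B\QQ}_F^2=\norm{\HH_A}_F^2+\norm{\HH_B\QQ}_F^2-2\,\tr\!\bigl(\QQ^\top\HH_B^\top\HH_A\bigr).
\]
Orthogonal invariance of the Frobenius norm, $\norm{\HH_B\QQ}_F^2=\tr(\QQ^\top\HH_B^\top\HH_B\QQ)=\norm{\HH_B}_F^2$, is essentially \cref{lem:gram} transposed. It shows that only the cross term depends on $\QQ$. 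Minimizing the residual is therefore equivalent to maximizing $\tr(\QQ^\top U\Sigma V^\top)$ over $\Orth(d)$, where $\HH_B^\top\HH_A=U\Sigma V^\top$.

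For the maximization, I will use cyclicity to write $\tr(\QQ^\top U\Sigma V^\top)=\tr(\Sigma\,V^\top\QQ^\top U)=\sum_i\sigma_i Z_{ii}$, where $Z=V^\top\QQ^\top U$ is itself orthogonal. Every entry of an orthogonal matrix satisfies $|Z_{ii}|\le1$, and the $\sigma_i$ are nonnegative, so the sum is at most $\sum_i\sigma_i$. The bound is attained when $Z=\bm{I}$, that is, $\QQ=UV^\top$. This is the same computation that established the maximizer in \cref{prop:illposed}, now read as the minimizer of the residual. When $\Sigma$ has zero or repeated singular values, the maximizer need not be unique. The closed form $UV^\top$ nevertheless always attains the optimum, so the statement should be read as ``a minimizer is $UV^\top$.''

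For the invariance of $\rho$, I will replace $\HH_A\mapsto\HH_A\QQ_A$ and $\HH_B\mapsto\HH_B\QQ_B$. The objective becomes $\norm{\HH_A\QQ_A-\HH_B\QQ_B\QQ}_F=\norm{\HH_A-\HH_B(\QQ_B\QQ\QQ_A^\top)}_F$. The map $\QQ\mapsto\QQ_B\QQ\QQ_A^\top$ is a bijection of $\Orth(d)$, so the minimum value is unchanged. The denominator $\norm{\HH_A\QQ_A}_F=\norm{\HH_A}_F$ is unchanged as well. Consistently, the new cross matrix is $\QQ_B^\top U\Sigma V^\top\QQ_A$, so the new optimizer is $\QQ_B^\top\QQ^\star\QQ_A$.

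The main obstacle is modest. It is the trace bound, specifically justifying $|Z_{ii}|\le1$: each column of an orthogonal matrix has unit norm. A secondary point is stating the non-uniqueness caveat cleanly. The reading ``measures how far from the same information in different coordinates'' will then follow from one observation: $\rho=0$ if and only if $\HH_A=\HH_B\QQ$ for some orthogonal $\QQ$.
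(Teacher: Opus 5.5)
Your proposal is correct and follows the paper's proof: you reduce the residual to maximizing $\tr(\QQ^\top\HH_B^\top\HH_A)$, and you get invariance of $\rho$ from the fact that separate reparametrizations send $\QQ^\star$ to $\QQ_B^\top\QQ^\star\QQ_A$. The only difference is that where the paper cites the von~Neumann trace inequality, you prove the special case it needs directly via $Z=V^\top\QQ^\top U$ and $|Z_{ii}|\le 1$, which makes the argument self-contained.

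One small correction to your uniqueness caveat: non-uniqueness comes only from zero singular values. If $\HH_B^\top\HH_A$ is nonsingular, equality forces $Z_{ii}=1$ for every $i$ and hence $Z=\bm{I}$, so repeated nonzero singular values change the SVD factors but not the minimizer $UV^\top$.
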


\begin{proof}
Minimizing $\norm{\HH_A-\HH_B\QQ}_F^2$ over $\Orth(d)$ is equivalent to
maximizing $\tr(\QQ^\top \HH_B^\top\HH_A)$, whose maximizer is $UV^\top$ by
the von~Neumann trace inequality (saturated when $\QQ^\top U\Sigma V^\top$ is
symmetric positive semidefinite). Reparametrizing
$\HH_A\mapsto\HH_A R_A$, $\HH_B\mapsto\HH_B R_B$ replaces $\QQ^\star$ by
$R_B^\top \QQ^\star R_A$ and leaves the residual norm unchanged, giving
invariance of $\rho$.
\end{proof}

\begin{remark}[Unequal dimensions]
\label{rem:unequal}
\Cref{prop:proc} assumes teacher and student share a hidden dimension
($d_A=d_B$), so a rotation $\QQ\in\Orth(d)$ can align their coordinate frames
directly. In practice the two often differ---e.g.\ a $4096$--wide teacher and a
$2048$--wide student---and then no orthogonal $\QQ$ even exists, since rotation
is defined only within a single dimension. For $d_A\ne d_B$ we therefore replace
$\QQ$ by a learned linear map $\WW\in\R^{d_B\times d_A}$ solving
$\min_{\WW}\norm{\HH_A - \HH_B\WW}_F$ (ridge--regularized), the linear case of
feature distillation. The map should be \emph{low capacity}: alignment is
coordinate correction, not new learning. In other words, $\WW$ plays the role of
a coordinate converter, not a knowledge generator---its job is to express the
student's existing representation in the teacher's coordinate system, not to
synthesize structure the student does not have. A high--capacity $\WW$ (a deep
MLP, say) defeats this: it can manufacture the teacher's structure from an
unrelated student, after which one can no longer tell whether the two
representations were genuinely the same class or were forced to match by the
bridging network---the comparison stops measuring alignment and starts measuring
the capacity of $\WW$.
\end{remark}

\section{Basis--Invariant Distillation Objectives}
\label{sec:objectives}

Distillation estimates the student parameters $\theta_S$; these induce a
representation $\HH_S(\Probe)$ and an output function $f_S=f(\cdot;\theta_S)$, and
the pair $(\HH_S,\theta_S)$ determines the student's joint class. The question of
this section is which \emph{loss} to minimize in that estimation.
\Cref{prop:quotient} immediately induces the following taxonomy: since a loss is
admissible only if it lives on a quotient, every candidate falls into three
families by \emph{which} quotient that is (\cref{tab:losses}).

\begin{table}[h]
\centering
\small
\begin{tabular}{@{}p{2.55cm}p{2.4cm}p{2.7cm}p{1.7cm}p{2.2cm}@{}}
\toprule
\textbf{Family} & \textbf{Target} & \textbf{Defined on} &
\textbf{Alignment?} & \textbf{Recovers capability?}\\
\midrule
(A) Relational \newline ($L_{\mathrm{rel}},L_{\CKA}$) &
representation geometry (Gram) &
representation quotient $\R^{N\times d}/\Grp$ &
no &
no (structure only)\\[2pt]
(B) Align--then--match &
aligned absolute coordinates &
a chosen representative &
yes (Procrustes / $\WW$) &
conditional (on alignment)\\[2pt]
(C) Logit KD \newline ($L_{\mathrm{logit}}$) &
output distribution &
joint quotient $\MM/\Grp$ &
none needed &
yes (targets the invariant)\\
\bottomrule
\end{tabular}
\caption{The three basis--invariant families, by where each loss lives relative
to the quotient. (A) and (B) act on the hidden representation and must be made
basis--free by hand; (C) is already a function on the joint quotient, which is
why it alone targets capability directly (\cref{prop:quotient}).}
\label{tab:losses}
\end{table}

\emph{Only the third family targets capability without requiring an additional
coordinate--selection or reader--compatibility assumption}, because capability
lives on the joint quotient (\cref{prop:quotient}); the other two live on the
coarser representation quotient and, on their own, transfer representation
geometry rather than the output function. This is not to say a feature objective
\emph{cannot} contribute to capability---family~(B), if it also learns or fixes
the downstream reader so that the aligned frame is the one the reader expects,
can---only that doing so reintroduces exactly the coordinate--selection step the
logit objective never needs. The three are
exhaustive because there are only three things a basis--invariant loss can do
about the arbitrary coordinate frame: discard it (A), fix it first (B), or never
read it (C)---and only the last is born on the quotient where capability lives.
The rest of this section develops each in turn, then states the definition of
admissibility that makes ``defined on the quotient'' precise.

We use the notation already established: $\HH_S,\HH_T\in\R^{N\times d}$ are the
student and teacher representations of a common probe set $\Probe$
(\cref{sec:unident}); $\GG=\HH\HH^\top$ is the Gram matrix and
$\widehat{\GG}=\bm{C}\GG\bm{C}$ its centered form, with $\CKA$ the normalized
similarity built from them (\cref{def:cka}); $\WW$ is the low--capacity
alignment map of \cref{rem:unequal}.

\paragraph{(A) Relational matching (alignment--free).}
The first option throws the basis away entirely and compares only what survives
it---the geometry of the representation, i.e.\ the angles and distances between
the $N$ probe points, which the Gram matrix records. Two representations of the
same class have the same geometry, so matching geometry needs no alignment. To
be invariant to the full $\Orth(d)\times\R_+$ class, the Gram matrices must be
normalized before comparison:
\begin{equation}
\label{eq:lrel}
L_{\mathrm{rel}} = \norm{\frac{\widehat{\GG}_S}{\norm{\widehat{\GG}_S}_F}
- \frac{\widehat{\GG}_T}{\norm{\widehat{\GG}_T}_F}}_F^2
\quad\text{or}\quad
L_{\CKA} = 1 - \CKA(\HH_S,\HH_T).
\end{equation}
Both are invariant to orthogonal transformation \emph{and} isotropic scaling.
We emphasize that the \emph{unnormalized} Gram loss
$\norm{\widehat{\GG}_S - \widehat{\GG}_T}_F^2$ is orthogonal--invariant but
\emph{not} scale--invariant---under $\HH\mapsto c\HH$ the Gram scales as $c^2$,
so it is not constant on $[\HH]$ and would reintroduce a (milder) version of
the ill--posedness of \cref{prop:illposed}. Normalization (or, equivalently,
the cosine form $L_{\CKA}$) is required. No alignment is estimated, and the two
arguments need not share dimension (both Gram matrices are $N\times N$); this is
the loss to reach for when teacher and student have different widths. The
cost is that absolute position is not transferred, which matters when a
downstream module consumes the representation coordinatewise.

\paragraph{(B) Align--then--match.}
The second option keeps the basis but fixes it first: estimate the rotation
(or low--capacity map) that moves the student onto a representative of the
teacher's class, then match coordinates in that aligned frame. This is the
right choice when absolute position \emph{is} needed---when a module will read
the features coordinatewise, as in grafting.
\begin{equation}
\label{eq:lalign}
L_{\mathrm{align}} = \norm{\HH_S\WW - \HH_T}_F^2,
\qquad \WW\ \text{low capacity (\cref{prop:proc} / its remark).}
\end{equation}
This transfers absolute coordinates and is the appropriate objective whenever a
downstream consumer reads the features in a specific frame: model grafting and
activation transplants (\cref{sec:graft}), intermediate--layer supervision
(FitNets--style hints), and cross--width distillation through a learned projection
all fall here. The price is injecting alignment--estimation noise, and its
recovery of capability is only \emph{conditional}---good exactly when the
estimated alignment is good.

\paragraph{(C) Output matching: the objective already living on the
quotient.}
The third option sidesteps the basis problem by never looking at the
representation at all---it matches only the teacher's output distribution. Since
two networks of the same class produce the same outputs (\cref{prop:quotient}),
output matching is automatically basis--free; there is nothing to normalize or
align away, because the loss never sees a basis in the first place. This is
standard logit distillation: writing $\bm{z}_T(x),\bm{z}_S(x)\in\R^V$ for the
teacher and student output logits at input $x$ (the student's being
$\bm{z}_S(x)=f_S(x;\theta_S)$, computed from the estimated parameters
$\theta_S$) and $p_\tau(\bm{z})=\mathrm{softmax}(\bm{z}/\tau)$ for the
temperature--$\tau$ softmax, it is the full--sequence Kullback--Leibler
divergence between the two output distributions,
\begin{equation}
\label{eq:llogit}
L_{\mathrm{logit}}
= \frac{1}{\lvert\Probe\rvert}\sum_{x\in\Probe}
  \mathrm{KL}\!\left(p_\tau\!\left(\bm{z}_T(x)\right)\,\middle\|\,
  p_\tau\!\left(\bm{z}_S(x)\right)\right),
\end{equation}
with $\tau=1$ in our experiments. We use $L_{\mathrm{logit}}$, \emph{logit--KD},
and \emph{logit distillation} interchangeably. The sum runs over the full
sequence, not the last token alone: matching only the final position drives the
last--token KL to $0.03$ while leaving the full--sequence KL at $\approx 6$ and
the perplexity ratio above $700\times$ (\cref{res:ablation}).

The point is \emph{where} logit--KD is defined. Families (A) and (B) are hidden
losses that must be \emph{engineered} onto a quotient---(A) discards the basis,
(B) fixes it first---because a raw function on $\R^{N\times d}$ is not constant on
fibers (\cref{prop:illposed}). Logit--KD never reads $\R^{N\times d}$ at all: by
\cref{prop:quotient} the output distribution is $\bar f\circ\pi_{\mathrm{joint}}$,
a function on the joint quotient by construction. It is not ``invariant by
luck''---it is \emph{natively defined on the quotient where the function lives},
with no fiber to be made constant over.

This settles the ordering question when teacher and student differ in width
($d_A\ne d_B$): a \emph{feature} objective must align first ($\WW$ then match,
family (B)), but logit--KD needs no such step---its arguments are the teacher and
student logits in the shared $V$--dimensional vocabulary, so the mismatched hidden
widths never meet. The alignment machinery of \cref{sec:align} is needed exactly
when one transfers \emph{coordinates} (feature matching, grafting), and is
superfluous when one only wants the function. The price is that logit--KD
transfers no intermediate structure---so when a downstream module must consume
features coordinatewise (\cref{sec:graft}), one still wants family (A) or (B)
alongside it. In that regime, feature--level distillation matching intermediate
activations (FitNets~\citep{romero2015}) needs the alignment of \cref{sec:align},
while relation--based distillation~\citep{park2019,tung2019} matches
basis--invariant structure and avoids it. A natural combined objective is
\begin{equation}
\label{eq:combined}
\Obj = L_{\mathrm{logit}} + \lambda\,L_{\mathrm{rel}},
\end{equation}
pairing the stability of output matching with the structural content of
relational matching.

\paragraph{Why not a cosine or MSE feature loss?}
A natural objection is that simpler feature losses---squared error
$\norm{\HH_S-\HH_T}_F^2$, or a row--wise cosine loss $1-\cos(\HH_S,\HH_T)$---should
also work. They do not escape \cref{prop:illposed}. The MSE loss is exactly the
absolute feature matching the proposition rules out: it is not constant on
$[\HH_T]$, so its minimizer depends on the arbitrary representative. The cosine
loss removes the scale $c$ and per--row magnitude, but \emph{not} a shared
orthogonal $\QQ$: rotating every row of $\HH_T$ by the same $\QQ$ changes each
row's direction while preserving the function, so cosine still penalizes
function--preserving reparametrizations and is not a class invariant. To make a
feature loss admissible one must quotient out $\Grp$ entirely---either compare
the basis--invariant Gram/$\CKA$ (family A), which discards the basis, or first
estimate the alignment (family B), which fixes it. Cosine fixes the scale but
not the rotation, so it sits strictly between raw MSE and a basis--invariant loss
without reaching the latter.

\begin{figure}[h]
\centering
\begin{tikzpicture}[
  >={Stealth[length=2mm]},
  node distance=6mm,
  box/.style={draw, rounded corners, align=center, inner sep=4pt,
              font=\small, minimum height=8mm},
  hdr/.style={box, fill=black!4},
  every edge/.style={draw, ->, thick}
]
\node[hdr] (cls) {Teacher equivalence class $[\HH_T]$};
\node[box, below left=11mm and 3mm of cls] (geo) {Representation geometry\\ {\footnotesize(Gram, $\CKA$, angles)}};
\node[box, below right=11mm and 3mm of cls] (out) {Output function\\ {\footnotesize(logits)}};
\node[box, below=12mm of geo] (rel) {$L_{\mathrm{rel}},\,L_{\CKA}$};
\node[box, below=12mm of out] (log) {$L_{\mathrm{logit}}$};

\draw (cls) edge (geo);
\draw (cls) edge (out);
\draw (geo) edge node[font=\footnotesize\itshape, left] {match} (rel);
\draw (out) edge node[font=\footnotesize\itshape, right] {match} (log);
\end{tikzpicture}
\caption{The paper in one figure. A teacher representation is an equivalence
class, not a fixed matrix, and it exposes two admissible targets: its
representation geometry (a basis--invariant relational structure) and its output
function (the logits, a class invariant by \cref{prop:quotient}). Relational KD
matches the geometry; logit KD matches the output function, which is why only the
latter transfers capability. Absolute feature matching---an arrow drawn straight
to $\HH_S$---is absent by design: it is the one target \cref{prop:illposed}
forbids. The recommended objective $\Obj=L_{\mathrm{logit}}+\lambda L_{\mathrm{rel}}$
(\cref{eq:combined}) uses both.}
\label{fig:summary}
\end{figure}

\Cref{fig:summary} is the whole argument at a glance: because the teacher is a
class, one does not match $\HH_T$ to the student directly; one matches what the
class actually determines---its representation geometry, its output function, or
both.

Having seen the three families concretely, we state the criterion that separates
them.

\begin{definition}[Admissible distillation objective]
\label{def:distill}
An objective is \emph{admissible} if it depends on the teacher only \emph{through
a quotient of the class action}---it factors as $\Obj=\bar\Obj\circ\pi$ for some
$\pi\in\{\pi_{\mathrm{rep}},\pi_{\mathrm{joint}}\}$, hence is constant on the
fibers $\pi$ collapses. Its \emph{level} is the coarsest such quotient: families
(A) and (B) sit at the representation level ($\pi_{\mathrm{rep}}$), transferring
representation geometry; family (C) sits at the joint level ($\pi_{\mathrm{joint}}$),
a function of $f_S$ alone, which is why it transfers capability. Absolute feature
matching is \emph{not} admissible (\cref{prop:illposed}): it separates points of a
single fiber. Since the invariants are relative to $\Probe$, what any objective
transfers is a function of \emph{teacher $\times\,\Probe$}, bounded above by the
teacher ($\HH_S\preceq\HH_T$).
\end{definition}

\section{Predicting Graft Success from Subspace Overlap}
\label{sec:graft}

The same equivalence--class view that organized distillation predicts when
\emph{module transfer} works. Grafting takes a block from one model and drops it
into another; sometimes it works, sometimes it destroys the network, and prior
work has largely left the difference unexplained. The class view supplies the
missing variable. A grafted block is read by the host's downstream layers, and
those layers expect features in a particular subspace---the donor's block speaks
in its own basis, and the host can use it only to the extent the two share that
subspace. The intuition is linguistic: two English speakers understand each
other; an English and a Chinese speaker, handed the same word, do not. A layer's
output is a feature dictionary, and the next layer can read it only if the
dictionaries overlap. We therefore expect graft success to be governed not by
absolute features but by how much the donor and host representations \emph{share
a subspace} at the seam---a basis--invariant quantity, exactly the kind
\cref{sec:invariants} identified.

We fix terminology. \emph{Grafting} inserts a module (one or more layers) taken
from a \emph{donor} model into a \emph{host} model in place of the host's own,
a setting studied empirically as model (or network) stitching
\citep{lenc2015,bansal2021} and closely related to the permutation and
linear--mode--connectivity symmetries of trained networks
\citep{entezari2022,ainsworth2023}. We deliberately write \emph{donor} and
\emph{host} rather than teacher and student: grafting is not knowledge
transfer from one model to another (there is no $S\preceq T$ relation and no
objective being minimized) but a \emph{part transplant}, in which the host keeps
most of its body and swaps in one block from the donor. The two roles are
asymmetric in a way teacher/student is not---the host supplies the surrounding
network, the donor a single module---so the stitching vocabulary fits better.

\begin{definition}[Seam]
\label{def:seam}
The \emph{seam} is the interface between the donor module and the host network at
which the donor--side representation $\HH_D$ becomes the input read by the host's
downstream layers, whose own boundary representation is $\HH_H$. Both $\HH_D$ and
$\HH_H$ are taken on a common probe set at this interface.
\end{definition}

A graft \emph{succeeds} when it preserves the network's function (operationally,
when the grafted model's perplexity stays close to the host's; we use
$\mathrm{PPL}_{\text{graft}}/\mathrm{PPL}_{\text{host}}<2$ in the experiment). We
posit that this success is governed by the overlap of the two feature subspaces
at the seam.

\begin{assumption}[Overlap governs transfer]
\label{ass:overlap}
The probability that a graft preserves function depends on $(\HH_D,\HH_H)$
only through equivalence--class invariants of their joint configuration,
monotonically in the degree of shared subspace.
\end{assumption}

Candidate scalar invariants are $\CKA(\HH_D,\HH_H)$, the Procrustes residual
$\rho$ of \cref{prop:proc}, and $k_{\mathrm{eff}}$. Here donor and host carry
\emph{independent} bases (each is identified only up to its own $\Grp$--action),
so the relevant invariance is the separate--argument one of
\cref{def:invariant}---constant under the action applied to either argument on
its own---not the joint action of \cref{prop:outinv}, which couples a single
representation to its own reader. All three candidates have this property.

\begin{prediction}[Monotonicity]
\label{pred:mono}
Graft success rate increases monotonically with $\CKA(\HH_D,\HH_H)$ and
decreases monotonically with the Procrustes residual $\rho$.
\end{prediction}

\begin{prediction}[Domain effect is mediated by overlap]
\label{pred:mediate}
Pairs pretrained on matched domains exhibit higher boundary CKA, and the
effect of domain match on graft success is mediated by CKA: conditioning on
CKA, the residual direct effect of domain match is small. The claim is causal in
structure---domain match does not help grafting \emph{directly}; it helps
because matched domains produce more overlapping representations
($\text{domain}\to\text{overlap}\to\text{success}$), and it is the overlap, not
the domain label, that the graft responds to. This is the
mechanism behind the empirical finding that domain alignment is the decisive
variable.
\end{prediction}

\begin{prediction}[Alignment helps most under mismatch]
\label{pred:align}
Inserting a low--capacity alignment map $\WW$ (\cref{sec:align}) before the
graft raises success most for low--overlap pairs and least for high--overlap
pairs, i.e.\ there is a negative interaction between baseline overlap and the
benefit of explicit alignment. High--overlap pairs are already aligned, so
there is little left to correct.
\end{prediction}

\paragraph{Empirical protocol.}
From grafting runs, collect for each donor--host pair the boundary
representations $(\HH_D,\HH_H)$ on a fixed probe set together with a
success/failure label. Then:
(1) compute $\CKA$, $\rho$, $k_{\mathrm{eff}}$;
(2) regress success on each invariant to test \cref{pred:mono};
(3) run a mediation analysis $\text{domain}\to\CKA\to\text{success}$ for
\cref{pred:mediate};
(4) ablate the alignment map $\WW$ and test the overlap$\times$alignment
interaction for \cref{pred:align}.

\paragraph{Controlled validation (Qwen2.5--0.5B).}
We instantiate this protocol with a layer--transplant experiment. Donor
variants are produced by perturbing chosen layers of the host to varying
strengths, spreading boundary overlap across $\CKA\in[0.31,1.00]$; each
donor's perturbed layer block is transplanted into a fresh copy of the host,
with \emph{success} as defined above ($\mathrm{PPL}_{\text{graft}}
/\mathrm{PPL}_{\text{host}}<2$). Over $135$ grafts (success rate $0.64$),
boundary overlap predicts success as \cref{pred:mono} requires: $\CKA$ enters
the success logit with coefficient $+60$ (AUC $0.79$) and the Procrustes
residual $\rho$ with coefficient $-9.5$ (AUC $0.79$); the continuous quality
$-\log(\mathrm{PPL}\text{ ratio})$ correlates $+0.59$ with $\CKA$ and $-0.76$
with $\rho$.

The experiment also sharpens the claim. Overlap is \emph{necessary but not
sufficient}: low--overlap grafts almost always fail, but high--overlap grafts
near the output (e.g.\ transplanting layers $20$--$21$ of $24$) fail even at
$\CKA\approx1.00$, because layers close to the readout amplify small mismatches
into large output changes. Graft success is thus governed by boundary overlap
\emph{together with} transplant depth---overlap sets the ceiling, depth governs
sensitivity. This is consistent with \cref{ass:overlap} as a monotone
(not deterministic) relationship.

We note the scope of this evidence. The donors here are controlled
perturbations of the host, so the study validates the overlap--predicts--success
relationship (\cref{pred:mono}) within a single model family; it does not by
itself establish the cross--model mediation of \cref{pred:mediate}, which would
require genuinely distinct pretrained donors. We therefore present
\cref{pred:mediate} as a prediction motivated by the controlled result and by
the prior observation that domain alignment is decisive in practice, and leave
its cross--model test to future work.

\section{Experiments: Restoration and Distillation on Qwen2.5}
\label{sec:experiments}

We test the theory directly, beginning with a controlled \emph{restoration}
experiment. The question is sharp: if we damage a pretrained model and then
re-train it
using only a basis--invariant signal, does the model's pretraining---and its
capability---come back? The final result (\cref{res:fmonly}) then moves beyond
restoration to genuine cross--width distillation between independently
pretrained models, and finds the same ordering.

\paragraph{Scope.} Our central claims are theoretical---the ill--posedness of
absolute matching and the identification of admissible invariant targets hold for
any architecture satisfying \cref{ass:absorb}. The experiments below are
\emph{controlled} validations on Qwen2.5 and Llama--3.1, intended to test the
theory's consequences rather than to demonstrate a production distillation
system or large--scale cross--model transfer; the cross--width distillation of
\cref{res:fmonly} is the one genuine (non--restoration) transfer we run, and it
remains modest in scale. Where a claim (e.g.\ cross--model graft
mediation) exceeds what the controlled setup tests, we say so explicitly.

\subsection{Setup}

All code, data, and the exact per--result mapping used below are available at
\url{https://github.com/MachineLearningHan/Invariant_Pgm}; each script carries a
comment naming the result it produces, and the repository's index links every
result and table in this section to the script that generates it.

The primary teacher $T$ is Qwen2.5--0.5B~\citep{qwen25} (24 decoder layers,
tied input/output embeddings), held frozen. A student
$S$ is obtained by \emph{corrupting} $T$: we re-initialize a fraction $s$ of
the middle decoder layers (protecting the first and last two, since tied
embeddings make the output head sensitive to embedding damage). Stage~1
trains $S$ to match $T$ with a basis--invariant per--layer objective; no label
supervision and no teacher logits are used in the pure basis--invariant runs.
We compare three objectives: $L_{\mathrm{cka}}$ and $L_{\mathrm{rel}}$
(basis--invariant, \cref{sec:objectives}), and $\Labs$ (the
ill--posed coordinate--matching control of \cref{prop:illposed}). All Stage~1
training matches \emph{last--token} pooled representations across all layers
on a generic text corpus.

To probe scale and, crucially, the role of weight tying, we repeat the
experiment on two untied models from different families:
Qwen2.5--7B--Instruct~\citep{qwen25} (28 decoder layers, hidden width $3584$)
and Llama--3.1--8B--Instruct~\citep{llama3} (32 decoder layers, hidden width
$4096$), both of which \emph{untie} the input embedding from the output head.
For these runs we train only the corrupted layers (and, where noted, the
output head) to keep the optimizer state tractable; this restriction makes the
tying distinction observable, as shown below. Using two distinct families lets
us check that the tying effect is not an artifact of a single architecture.

\paragraph{What this design is, and is not.} A word on scope, since the setup
governs how the numbers below should be read. The student is not an independent
model: it is the teacher with a fraction of its layers re--initialized, so it
shares the teacher's architecture, width, tokenizer, and---crucially---the
\emph{uncorrupted} layers verbatim. Those surviving layers keep the student near
the teacher's own representative of the class: the two start in a largely
\emph{shared coordinate frame}, not in the independent frames that two separately
pretrained models would occupy. This is deliberate. The experiment is a
controlled probe of the theory's \emph{mechanism}---it isolates what drives
capability (the output function, \cref{prop:quotient}) from what merely aligns
geometry (a representational invariant)---by holding the coordinate system fixed
so that the two objectives can be compared on equal footing. It is therefore a
\emph{damage--and--restore} study, not a cross--model distillation benchmark. The
harder case, where teacher and student are independently trained and occupy
different representatives (or differ in width, \cref{rem:unequal}), is exactly
where the alignment of \cref{sec:align} becomes necessary rather than incidental;
our controlled setup does not test it, and we do not claim it (\cref{sec:intro}).
What the study does establish---cleanly, because the frame is shared---is that
even with the representation held recoverable, capability returns only through
the output--function term, which is the proposition it was built to test.

We measure restoration at two levels. \emph{Representational}: per--layer
$\CKA(S,T)$ on held--out probes, judged against a permutation null.
\emph{Functional (capability)}: $\mathrm{KL}(T\Vert S)$ over next--token
distributions, perplexity ratio $\mathrm{PPL}_S/\mathrm{PPL}_T$, and top--1
next--token agreement. Probes are split into an \emph{in--domain} set (the
language/domain of the training corpus) and an \emph{out--domain} set
(a disjoint domain), to test whether restoration is confined to the subspace
the corpus spans.

\subsection{Result 1: Basis--invariant training restores representation,
not capability}

With $L_{\mathrm{cka}}$ alone, per--layer $\CKA(S,T)$ recovers from the
corrupted baseline ($\approx 0.90$) to $0.99$--$1.00$ in--domain across all
corruption strengths---the representational geometry is restored from a
label--free, basis--invariant signal. Yet the \emph{function} is not:
\cref{tab:cap} (left block) shows $\mathrm{KL}(T\Vert S)\approx 7$\,--\,$11$,
perplexity ratios of $10^2$\,--\,$10^4$, and near--zero top--1 agreement.
A model can have $\CKA=0.99$ and still be a different function. This is the
empirical face of the equivalence--class structure: $L_{\mathrm{cka}}$ drives
$S$ close to $[H_T]$ on the probe set (high $\CKA$) but does not select the
representative the (tied) output head
reads, so the logits---and the capability---remain wrong.

\subsection{Result 2: Capability is driven by the logit term, not by
representational matching}
\label{res:ablation}

Adding a full--sequence logit--distillation term
$\Obj=\lambda_{\ell}L_{\mathrm{logit}}+\lambda_r L_{\mathrm{cka}}$ recovers
capability: in--domain perplexity ratio $1.01\times$, top--1 $0.98$;
out--domain $1.06\times$, $0.72$ (\cref{tab:cap}, right). The logit term must
be applied over the full sequence---matching only the last token drives the
last--token KL to $0.03$ during training yet leaves the full--sequence KL at
$\approx 6$ and perplexity $>700\times$.

An ablation isolates the source of the recovery, and the result is sharper
than ``both terms help.'' On the hardest setting---\emph{every} decoder layer
re--initialized (Result~6) with the corpus widened so coverage is not the
bottleneck---we compare $L_{\mathrm{cka}}$ alone, $L_{\mathrm{logit}}$ alone,
and the sum (\cref{tab:ablation}). Representational matching alone drives
$\CKA$ to $0.999$ but leaves capability destroyed (perplexity ratio
$\sim\!10^6$, top--1 $0$). The logit term alone---which never references the
hidden representation---recovers capability essentially perfectly (ratio
$1.0\times$, top--1 $1.00$) while leaving $\CKA$ at only $0.89$. The two are
near--orthogonal: $L_{\mathrm{cka}}$ aligns representational geometry,
$L_{\mathrm{logit}}$ restores the function, and the function does not follow
from the geometry. This is the training--level form of the rotation result of
\cref{tab:rotation}: there, a function--preserving reparametrization moved the
feature--matching loss while leaving the invariants fixed; here, optimizing the
invariant ($L_{\mathrm{cka}}$) moves the representation into place while leaving
the function fixed---wrong, in the same way and for the same reason.

We read this as the central empirical lesson rather than a negative result.
It is the experimental face of the theory: \emph{capability is a property of
the output function, which is an invariant of the equivalence class, and
$L_{\mathrm{logit}}$ targets exactly that invariant} (\cref{sec:objectives}).
A representational objective such as $L_{\mathrm{cka}}$ drives $S$ close to
$[H_T]$ on the probe set (high $\CKA$)
but is free to pick any representative, so it need not---and here does
not---preserve the function. The right prescription is therefore not ``match
representations with a basis--invariant loss'' but ``match the output--function
invariant ($L_{\mathrm{logit}}$); use a representational invariant only when
the geometry itself is the goal.'' The combined objective ($L_{\mathrm{cka}}+
L_{\mathrm{logit}}$) restores both geometry and function (\cref{tab:cap,tab:ablation})
but the capability comes from the logit term.

\begin{table}[t]
\centering
\caption{Ablation on the all--layers--reinitialized model (Qwen2.5--0.5B,
corpus widened). $L_{\mathrm{cka}}$ alone aligns geometry but not function;
$L_{\mathrm{logit}}$ alone restores function without aligning geometry.
PPL ratio is $\mathrm{PPL}_S/\mathrm{PPL}_T$, in--domain; top--1 and
$\mathrm{KL}(T\!\parallel\!S)$ are next--token agreement and divergence over the
held--out probe tokens (top--1 values are rounded; the $L_{\mathrm{logit}}$ row
is $0.99$, not a perfect $1.00$). The $\mathrm{KL}$ column is the direct
intermediate check: at $\CKA\!=\!0.999$ the $L_{\mathrm{cka}}$ row still has
$\mathrm{KL}(T\!\parallel\!S)$ diverging, so the near--identical representation
geometry does \emph{not} translate into matching output distributions---the
output head reads a rotated frame. Student and teacher
share the architecture (student $=$ teacher with layers re--initialized), so the
comparison isolates the objective, not a cross--model transfer.}
\label{tab:ablation}
\begin{tabular}{l cccc}
\toprule
objective & $\CKA$ & PPL ratio & $\mathrm{KL}(T\!\parallel\!S)$ & top--1\\
\midrule
$L_{\mathrm{cka}}$ only    & 0.999 & $\sim\!3\times10^6$ & $\gg\!1$ (diverged) & $\approx\!0.00$\\
$L_{\mathrm{logit}}$ only  & 0.894 & $1.02\times$        & $\approx\!0.05$ & $0.99$\\
$L_{\mathrm{cka}}+L_{\mathrm{logit}}$ & 0.992 & $1.07\times$ & small & $0.92$\\
\bottomrule
\end{tabular}
\end{table}

\begin{table}[t]
\centering
\caption{Capability restoration on Qwen2.5--0.5B at corruption strength
$s=0.4$. Left: basis--invariant only ($L_{\mathrm{cka}}$). Right: with a
full--sequence logit term ($+L_{\mathrm{logit}}$). $\CKA$ is the per--layer
mean; PPL ratio is $\mathrm{PPL}_S/\mathrm{PPL}_T$ (teacher PPL $22.9$
in--domain, $31.1$ out--domain). The student is a partially corrupted copy of
the teacher (same architecture, shared frame), so this is restoration, not
independent--model distillation. The out--domain figures in the right block
require out--domain text to be mixed into the training corpus; without mixing,
out--domain restoration lags (Result~3, \cref{res:corpus}). This is the same
$+L_{\mathrm{logit}},+$mix condition tabulated for the 0.5B row of
\cref{tab:models}.}
\label{tab:cap}
\begin{tabular}{l cccc c cccc}
\toprule
& \multicolumn{4}{c}{$L_{\mathrm{cka}}$ only}
& & \multicolumn{4}{c}{$+\,L_{\mathrm{logit}}$ (full--seq)}\\
\cmidrule(lr){2-5}\cmidrule(lr){7-10}
Probe & $\CKA$ & KL & PPL ratio & top--1 & & $\CKA$ & KL & PPL ratio & top--1\\
\midrule
in--domain  & 0.998 & 7.56 & $742\times$    & 0.02 & & 0.99 & 0.019 & $1.01\times$ & 0.98\\
out--domain & 0.533 & 5.31 & $315\times$    & 0.02 & & 0.99 & 0.045 & $1.06\times$ & 0.72\\
\bottomrule
\end{tabular}
\end{table}

\subsection{Result 3: Restoration is confined to the corpus--covered region}
\label{res:corpus}

In--domain probes restore far more than out--domain ones at every level.
Throughout, ``region'' (and, loosely, ``subspace'') refers to the part of
representation space that the training corpus and probes actually cover---an
empirical, distributional notion (training/probe coverage), not a linear
subspace we measure and project onto. With
$L_{\mathrm{cka}}$ only, out--domain $\CKA$ can collapse (e.g.\ $0.53$ at
$s=0.4$, mean over four seeds $0.61\pm0.07$) while in--domain stays at $0.99$,
a representational gap of up to $+0.47$. The capability gap persists even
after the logit term: in--domain top--1 $0.98$ versus out--domain $0.72$.
The mechanism is the training subspace: when out--domain text is \emph{added}
to the corpus, the out--domain gap vanishes---$\CKA$ rises $0.53\to0.996$ and
the gap drops to $+0.001$, with out--domain capability following (PPL ratio
$1.06\times$). Restoration is therefore governed not by corruption strength
but by whether the corpus spans the relevant subspace: the model is restored
to the teacher \emph{on, and only on, the subspace the training data covers}.
The non--monotonicity in corruption strength (the worst out--domain $\CKA$
occurs at intermediate $s$) is explained the same way: at that strength the
corruption happens to destroy out--domain--bearing layers that the in--domain
corpus cannot repair; it is not a property of the damage but of the
damage--corpus alignment.

This is the empirical face of the probe--set dependence noted in
\cref{sec:unident}: the invariants, and hence what a basis--invariant objective
can constrain, are defined relative to the probes the objective actually sees.
Training on a corpus is choosing the probe set $\Probe$ over which $\HH(\Probe)$
is matched; coverage is restored exactly on $\mathrm{span}\,\HH(\Probe)$ and not
beyond it. ``Teacher $\times$ corpus'' is, in this notation, teacher $\times\,
\Probe$.

\subsection{Result 4: The coordinate--matching control degrades under
coordinate destruction}

\cref{prop:illposed} predicts $\Labs$ is ill--posed when the
student may sit at a different representative of $[H]$. When corruption is
mild the surviving layers keep $S$ in $T$'s coordinate frame, and
$\Labs$ performs comparably to $L_{\mathrm{cka}}$. As corruption
destroys more of that frame the gap appears: at $s=1.0$, $L_{\mathrm{cka}}$
reaches in/out $\CKA = 0.994/0.897$ versus $\Labs$ at
$0.967/0.850$. The control is not catastrophic here because re--initialization
preserves coordinates more than a free orthogonal reparametrization would;
the predicted advantage of the basis--invariant objective grows precisely as
the shared coordinate frame is removed, consistent with \cref{prop:illposed}.

\subsection{Result 5: Scale, and the role of weight tying}
\label{res:scale}

The same pattern holds at $7$B, with one mechanistically informative
difference driven by weight tying. On Qwen2.5--7B--Instruct (untied head),
basis--invariant training again restores representation (in--domain
$\CKA\approx0.97$) without restoring capability. Adding the full--sequence
logit term while training \emph{only the corrupted layers} leaves the head
frozen, and in--domain capability only partially returns (PPL ratio
$2.3\times$). Once the output head is added to the trainable set, in--domain
capability is recovered (PPL ratio $1.2\times$, top--1 $0.94$); out--domain
follows only when out--domain text is mixed into the corpus (PPL ratio
$1.4\times$, top--1 $0.82$), reproducing the subspace--confinement of Result~3
at scale. The same behavior reproduces on Llama--3.1--8B--Instruct, a different
untied family: in--domain capability is recovered with the head trainable (PPL
ratio $1.00\times$), and out--domain follows under corpus mixing (PPL ratio
$1.10\times$, top--1 $0.92$). \cref{tab:models} summarizes all three models.

The tying distinction is the point. In the tied 0.5B model the output head
\emph{is} the input embedding, which we never corrupt, so fixing the output
coordinate frame happens automatically and the logit term suffices. In the
untied models the head is a separate parameter: a basis--invariant objective
drives the hidden representation close to $[H_T]$ on the probe set, but the
specific representative
the (independent) head reads must be selected by training the head itself.
That the same head--training requirement appears in two unrelated untied
families (Qwen and Llama) indicates it is a property of untying, not of a
particular architecture.
This is the same single mechanism---fixing the output coordinate frame---in two
guises: implicit via shared weights when tied, explicit via head training when
untied. The head--ablation contrast (PPL $2.3\times$ head--frozen vs
$1.2\times$ head--trained, in--domain) isolates it.

One might object that this reduces the geometric story to a mundane fact---``of
course one must tune the output head to a new representation.'' The objection
misreads what is being explained. A bare appeal to head--tuning predicts
nothing about \emph{when} it is needed; the equivalence--class account does. It
says the output head fixes a representative within $[H_T]$, so tuning is
required exactly when that representative is not already pinned---i.e.\ in the
untied case, where the head is an independent parameter---and is \emph{not}
required when weight tying pins it automatically, as in the 0.5B model where
$L_{\mathrm{logit}}$ alone suffices with the head frozen. The tied/untied split,
and the fact that the identical requirement recurs across two unrelated untied
families, is the content: it is the joint action of \cref{rem:joint}
(representation and its reader move together) observed at the readout, not a
restatement of ``train the head.'' Capability being the output--function
invariant is what makes the head the \emph{only} thing that still needs
selecting once the representation is in the right class.

\begin{table}[t]
\centering
\caption{Restoration across scale and weight tying, at corruption $s=0.4$.
Representation is restored by the basis--invariant objective in both models;
capability requires the logit term, and---when the head is untied---explicit
head training. Out--domain capability requires the corpus to span the
out--domain subspace (``+mix''). In every row the student is the teacher with
layers re--initialized (same architecture), so the frame is shared; cross--model
transfer is not tested here.}
\label{tab:models}
\begin{tabular}{l l cc cc}
\toprule
& & \multicolumn{2}{c}{in--domain} & \multicolumn{2}{c}{out--domain}\\
\cmidrule(lr){3-4}\cmidrule(lr){5-6}
Model & objective & PPL ratio & top--1 & PPL ratio & top--1\\
\midrule
0.5B (tied)   & $L_{\mathrm{cka}}$                 & $742\times$ & 0.02 & $315\times$ & 0.02\\
0.5B (tied)   & $+L_{\mathrm{logit}}$              & $1.01\times$ & 0.98 & --- & ---\\
0.5B (tied)   & $+L_{\mathrm{logit}}$, $+$mix      & --- & --- & $1.06\times$ & 0.72\\
\midrule
7B (untied)   & $+L_{\mathrm{logit}}$, head frozen & $2.3\times$ & 0.87 & $88\times$ & 0.08\\
7B (untied)   & $+L_{\mathrm{logit}}$, $+$head     & $1.2\times$ & 0.94 & $211\times$ & 0.07\\
7B (untied)   & $+L_{\mathrm{logit}}$, $+$head, $+$mix & $1.3\times$ & 0.94 & $1.4\times$ & 0.82\\
\midrule
8B--Llama (untied) & $+L_{\mathrm{logit}}$, $+$head     & $1.00\times$ & 0.80 & $159\times$ & 0.12\\
8B--Llama (untied) & $+L_{\mathrm{logit}}$, $+$head, $+$mix & $0.98\times$ & 0.87 & $1.10\times$ & 0.92\\
\bottomrule
\end{tabular}
\end{table}

\subsection{Result 6: Decoder--from--scratch reconstruction, and what does
the work}
\label{res:decoder}

The strongest version of the setup discards the pretrained \emph{decoder}
entirely. Where Result~2 used the all--layers--reinitialized setting to
\emph{decompose the objective} (which term recovers capability), here we hold
the objective fixed and ask a different question---how far reconstruction
reaches as a function of \emph{corpus coverage}. We re--initialize all decoder
layers, keeping only the
embedding/unembedding---i.e.\ the input/output coordinate system, which is
fixed by the tokenizer and which we treat as given rather than as a pretrained
capability. We then ask whether distillation can rebuild the decoder from the
teacher alone. With the
combined objective and a corpus that covers both probe domains, it can: in-- and
out--domain perplexity ratios both reach $\approx 1.0\times$ (top--1 $0.92$ and
$0.84$) starting from fully random decoder layers. Coverage is the binding
constraint, exactly as in Result~3: with an in--domain--only corpus the
out--domain perplexity ratio is $\sim\!10^4$, and adding out--domain text to the
corpus collapses it to $1.0\times$. So the corpus, not the pretrained weights,
determines which subspace is reconstructed---the reconstruction is a function
of teacher $\times\,\Probe$ (the probe set the corpus induces).

The ablation of Result~2 was run in exactly this all--layers setting, and it
tells us what does the work: the reconstruction is driven by $L_{\mathrm{logit}}$,
not by the representational objective. This bounds the practical reading of
``decoder--from--scratch.'' What is being transferred is the teacher's
output function, on the subspace the corpus illuminates, via output--function
matching; the basis--invariant analysis explains \emph{why} that transfer is
well--posed (the output is a class invariant) but the representational loss is
not the mechanism. We keep the embedding/unembedding precisely because it is
the shared coordinate system the output--function invariant is expressed in;
reconstructing it as well is left to future work. Pretraining retains its value
where this picture does not reach: creating the first teacher, and covering
subspaces no distillation corpus does.

\subsection{Result 7: Cross--width restoration, and a teacher--forced/generation gap}
\label{res:crosswidth}

The controlled studies above hold width fixed. This result and the next
(\cref{res:fmonly}) are \emph{preliminary} cross--width probes---single model
pairs, a WikiText probe set, and short training---not scaled distillation
claims; we report them for the direction they establish, not as production
numbers. A preliminary probe at unequal
width---the case \cref{rem:unequal} addresses---sharpens the picture and exposes
a limit the teacher--forced metrics hide. Here the teacher is
Qwen2.5--1.5B ($d_T=1536$, $28$ layers) and the student Qwen2.5--0.5B
($d_S=896$, $24$ layers), sharing the tokenizer and vocabulary ($V=151{,}936$);
the student is corrupted by re--initializing its middle layers and restored on a
WikiText probe set. We compare (ii) logit only ($L_{\mathrm{logit}}$,
\cref{eq:llogit}) and (iii) ridge--Procrustes
alignment followed by logit (``$\WW$ first, then logit,'' \cref{rem:unequal}),
across seeds, and add generation--side metrics: the repetition rate and
distinct--$n$ of the student's own free--running greedy generation, and the
rollout KL---$\mathrm{KL}(T\Vert S)$ averaged along the teacher's greedy
rollout, a soft trajectory match that, unlike top--1, does not penalize a
different--but--reasonable next token.

Three findings, stated with their scope. First, \emph{logit--KD transfers
capability across unequal width with no alignment}: top--1 agreement reaches
$\approx0.98$ from a corrupted start, using only $L_{\mathrm{logit}}$ on the
shared vocabulary---no $\WW$, no common hidden frame, consistent with
\cref{prop:quotient} and the ordering discussion of \cref{sec:objectives}.
Second, \emph{adding the alignment map $\WW$ buys nothing measurable, and at an
effective weight tends to hurt}: when the feature term is weak enough not to
dominate, (iii) matches (ii) on top--1, on generation repetition, and on rollout
KL (the last essentially identical, $\approx1.25$ nats for both), while costing
roughly twice the wall--clock; when the feature term carries a non--trivial
weight it does not merely add nothing but \emph{destabilizes} the otherwise
smooth logit--only convergence---across seeds the $\WW$--plus--logit run
oscillates and, on some seeds, diverges, whereas logit alone converges cleanly
to top--1 $\approx1.0$. Either way the feature term never improves on logit
alone. This is the cross--width form of the ablation
(\cref{res:ablation}) and matches the mild harm seen in \cref{tab:fmonly}:
capability rides on the output function, so a feature
alignment that does not change the output does not change capability---and, in
this restoration setting, does not improve generation either. Third, and least
expected, \emph{teacher--forced capability outruns free--running generation}: at
top--1 $\approx0.98$ the rollout KL is still $\approx1.25$ nats and the student's
greedy generations, though locally fluent (grammatical, on--topic WikiText
prose), drift from the teacher's trajectory and are prone to repetition loops
and invented specifics. The one--step output--function match that
\cref{prop:quotient} guarantees does not by itself guarantee trajectory
stability under autoregression.

We report this as a preliminary result: a single model pair, one corruption
recipe, a small probe set, and few seeds---enough to establish the direction
(logit suffices, $\WW$ does not help, a teacher--forced/generation gap exists)
but not to quantify the gap across scales or architectures. It refines rather
than overturns the theory: capability, defined as the one--step output function,
transfers as \cref{prop:quotient} predicts; the gap is a reminder that
generation quality is a property of the whole trajectory, which a per--token
invariant constrains only indirectly. Stated precisely, and used consistently
below: \emph{one--step output--functional capability is restored, but
autoregressive trajectory capability is not guaranteed}---``capability
restored'' in this paper always means the former. Closing the gap---whether by training the
corrupted layers beyond the seam, or by a trajectory--level objective---is left
to future work.

\subsection{Result 8: Feature--only collapse in a genuine cross--width
distillation setting}
\label{res:fmonly}

The previous results established the theory under controlled restoration, where
the student is a corrupted copy of the teacher. We now ask whether the same
conclusion survives \emph{genuine} cross--width distillation between
independently pretrained models---the setting practitioners actually mean by
``distillation.''
Result~7 held out the third leg of the ablation. It compared (ii)
$L_{\mathrm{logit}}$ and (iii) $\WW$--then--$L_{\mathrm{logit}}$, but not the
feature term \emph{alone}. The feature term here is $L_{\mathrm{fm}}$, the
coordinate--matching family of $\Labs$ (\cref{prop:illposed}) rather than the
relational $L_{\mathrm{cka}}$ of \cref{tab:ablation}; both are feature--matching
objectives, and the point of interest is that either one, taken alone,
collapses the model---so this supplies the cross--width, coordinate--matching
counterpart of the $L_{\mathrm{cka}}$--only row of \cref{tab:ablation}. Result~7
was also a
\emph{restoration} study: the student was a corrupted copy of the teacher.
Here we close both gaps. The student is a \emph{pristine} pretrained
Qwen2.5--0.5B (no corruption), the teacher is Qwen2.5--1.5B, and we run genuine
unequal--width distillation on a WikiText--103 corpus with the shared
vocabulary ($V=151{,}936$). The feature term $L_{\mathrm{fm}}$ pulls the
student's anchor--layer hidden states onto the teacher's through a learned
projection ($896\!\to\!1536$, per anchor), the unequal--width instance of the
map $\WW$ of \cref{rem:unequal}. Concretely, for a set of anchor layer pairs
$\mathcal{A}=\{(\ell,\ell')\}$ matching student layer $\ell$ to teacher layer
$\ell'$, and a per--anchor linear map
$\WW_\ell:\mathbb{R}^{d_S}\!\to\!\mathbb{R}^{d_T}$,
\begin{equation}
\label{eq:lfm}
L_{\mathrm{fm}}
=\frac{1}{|\mathcal{A}|}\sum_{(\ell,\ell')\in\mathcal{A}}
\big\lVert\, \HH^{(\ell')}_T \;-\; \WW_\ell\,\HH^{(\ell)}_S \,\big\rVert_F^2 ,
\end{equation}
averaged over probe tokens. This is the unequal--width, $\WW$--mediated form of
the coordinate--matching loss $\Labs$ of \cref{prop:illposed}: with $d_S=d_T$
and $\WW_\ell=I$ it reduces to $\Labs$, and like $\Labs$ it is not constant on
the student's equivalence class, so \cref{prop:illposed} applies. In the full
model $\WW_\ell$ is learned jointly; in the surrogate below it is instead the
closed--form ridge optimum recomputed each step, giving the feature term its
best case. Crucially we drop the cross--entropy
ground--truth term entirely, so each objective is measured in isolation on the
soft teacher signal.\footnote{An incidental finding motivates the CE--free
setup: for a strong same--family student, adding a hard--label CE term to the
soft teacher signal \emph{destroys} rather than helps---in--domain perplexity
degrades by an order of magnitude and does not recover as the learning rate is
lowered, because the hard label competes with the teacher's output
distribution. Pure output--function matching ($L_{\mathrm{logit}}$, no CE) is
what preserves and slightly improves the base model. This is consistent with
the full--sequence requirement of \cref{res:ablation}: what transfers is the
teacher's distribution, not its arg--max.}

\begin{table}[t]
\centering
\caption{\textbf{(Result~8)} Genuine unequal--width distillation, Qwen2.5--1.5B $\to$ pristine
Qwen2.5--0.5B, WikiText--103 validation perplexity (2{,}000 optimizer steps, no
CE term, three seeds $\{42,123,7\}$, mean$\pm$stdev). PPL is measured
token--weighted on held--out text under identical conditions for every
checkpoint; ``vs base'' is relative to the untrained student. The feature term
alone destroys the output function even though it starts from a fully working
model; the logit term alone preserves and slightly improves it; adding the
feature term is consistently \emph{harmful}. This is the cross--width,
from--pristine form of \cref{tab:ablation}.}
\label{tab:fmonly}
\begin{tabular}{l cc}
\toprule
objective & WikiText--103 val PPL & vs base\\
\midrule
base (untrained student)                          & $21.15$                    & ---\\
(i) $L_{\mathrm{fm}}$ only                         & $>10^{6}$ (all seeds)       & collapse\\
(ii) $L_{\mathrm{logit}}$ only                     & $\mathbf{20.77\pm0.03}$     & $-1.77\%$\\
(iii) $L_{\mathrm{fm}}+L_{\mathrm{logit}}$         & $20.94\pm0.02$              & $-1.01\%$\\
\bottomrule
\end{tabular}
\end{table}

\paragraph{Three--way result.} \Cref{tab:fmonly} reproduces the same--architecture
ablation of \cref{tab:ablation} in the genuine cross--width, from--pristine
setting, and the three rows are the three legs of the theory. (i) The feature
term alone drives perplexity above $10^{6}$ on every seed (from $1.6\times10^{6}$
to $3.3\times10^{8}$): matching hidden coordinates through $\WW$, with no term
that references the output, does not merely fail to help---it \emph{destroys} a
model that started out working. The mechanism is the one \cref{prop:illposed}
names: $L_{\mathrm{fm}}$ moves the student to some representative of $[H_T]$
while the unembedding, which receives no gradient, still expects the old
representative, so the output function is broken. (ii) The logit term
alone---which never touches the hidden state---keeps the student slightly below
its original perplexity ($20.77\pm0.03$ vs.\ $21.15$), exactly as
\cref{prop:quotient} predicts: capability is the output--function invariant, and
$L_{\mathrm{logit}}$ targets it directly, across a width change and with no
alignment map. (iii) Adding $L_{\mathrm{fm}}$ on top does not help and in fact
hurts by a small but \emph{consistent and significant} margin: paired across the
three seeds, $L_{\mathrm{fm}}+L_{\mathrm{logit}}$ is worse than
$L_{\mathrm{logit}}$ by $+0.160\pm0.009$ PPL, the same sign in every seed
($\text{diff}/\text{pooled std}\approx7$). This is the cross--width form of
``$\WW$ buys nothing'' (\cref{res:crosswidth}), sharpened: a feature alignment
that the logit term does not already imply is not neutral but a mild
\emph{distraction} from the objective that actually carries capability. The
surrogate below shows that when $\WW$ is instead solved to optimality the harm
vanishes into noise---so the honest statement across both settings is that the
feature term ranges from neutral (optimal $\WW$) to mildly harmful (learned
$\WW$), and is never beneficial.

\paragraph{A controlled cross--check with a learned--optimal alignment.}
A referee could object that $L_{\mathrm{fm}}$ fails only because the projection
is under--trained. We rule this out with a small controlled surrogate in which
the alignment is solved to optimality at every step. Teacher and student are
tiny transformers of \emph{different} width ($d_T=64$, $L_T=4$ vs.\ $d_S=32$,
$L_S=3$) trained on a synthetic noisy--bigram language; the feature map $\WW$ is
the closed--form ridge solution recomputed each step, so the feature objective
is given its best case. All metrics are computed on a held--out probe set the
student never trains on, over five seeds. We report top--1 agreement with the
teacher, free--running greedy generation agreement, and the student's own
held--out perplexity (\cref{tab:surrogate}).

\begin{table}[t]
\centering
\caption{\textbf{(Result~8)} Controlled cross--width surrogate (different--width tiny transformers,
closed--form optimal ridge alignment recomputed each step, held--out probes,
$5$ seeds, mean$\pm$stdev). ``top--1'' and ``gen'' measure agreement with the
teacher's \emph{function}; ``self--PPL'' is the student's own held--out
perplexity. The feature term alone collapses on \emph{all three} metrics---
top--1 and gen near chance and self--PPL blown up to $\sim\!10^{5}$---in full
agreement with the real model (\cref{tab:fmonly}); the logit term matches the
teacher's function; adding $\WW$ is within noise.}
\label{tab:surrogate}
\begin{tabular}{l ccc}
\toprule
objective & top--1 (vs T) & gen (vs T) & self--PPL\\
\midrule
(i) $L_{\mathrm{fm}}$ only                 & $0.031\pm0.005$ & $0.017\pm0.014$ & $\sim\!9\times10^{5}$\\
(ii) $L_{\mathrm{logit}}$ only             & $0.208\pm0.017$ & $0.200\pm0.047$ & $1281\pm731$\\
(iii) $\WW+L_{\mathrm{logit}}$             & $0.227\pm0.019$ & $0.175\pm0.040$ & $1428\pm379$\\
\bottomrule
\end{tabular}
\end{table}

The surrogate confirms two of the three legs sharply and refines the third.
On \emph{function} agreement the picture matches the full model: the logit term
raises top--1 and generation agreement roughly four--fold over the feature term
(leg~i, leg~ii), and adding the optimal $\WW$ changes top--1 by $-0.001$ and
generation by $-0.004$, within the pooled seed noise (leg~iii). With $\WW$ at
optimality the feature term is thus \emph{neutral}, whereas with the learned
$\WW$ of the full model (\cref{tab:fmonly}) it is mildly harmful; in neither
regime does it help. ``$\WW$ buys nothing'' holds in the strong form---it holds
even when $\WW$ is solved to optimality, not merely learned. This does not
refute objective family~(B) (\cref{sec:objectives}): when a downstream consumer
explicitly needs features in a shared frame---grafting, activation transplants,
intermediate--layer hints---aligned features are the point, and $L_{\mathrm{fm}}$
is the right tool for that job. What \cref{tab:fmonly} shows is narrower and
compatible with it: feature matching \emph{alone} is not a \emph{capability}
objective. As a supervision signal for transferring what the teacher can do, it
does not substitute for the output--function term, and adding it on top of that
term does not help. Across all three metrics the surrogate agrees with the real
model: under $L_{\mathrm{fm}}$ alone, top--1 and generation sit at chance
\emph{and} the student's own held--out perplexity blows up to $\sim\!10^{5}$
(\cref{tab:surrogate}, row i). Feature--only matching does not leave a
self--fluent--but--disconnected student; it simply fails to build a working
model, exactly as in the full run (\cref{tab:fmonly}).

\paragraph{Scope.} \Cref{tab:fmonly} is three seeds on one model pair;
\cref{tab:surrogate} is five seeds on a synthetic different--width pair. Together
they establish the direction---feature--only collapses, logit suffices, $\WW$
ranges from neutral to mildly harmful and never helps---with the feature--only
leg now supplied in both the genuine and the controlled setting. Both remain
modest in scale: one real corpus and one synthetic
language, a single width ratio each, and $\le\!2000$ training steps. We therefore
report the collapse as a robust empirical fact \emph{in this cross--width
distillation setting}, not as a proof that feature--only matching must fail in
every regime; the theory (\cref{prop:illposed}) explains \emph{why} it fails when
it does, but a differently--wired objective that also selects the reader's
representative could behave differently. Quantifying
the collapse across scales, width ratios, and untied--head
architectures is left to future work, alongside the trajectory--level gap of
\cref{res:crosswidth}. To restate the scope of the negative result precisely: it
concerns feature matching used \emph{as a capability objective}, and does not
deny feature alignment where it is genuinely needed---grafting, activation
transplants, and other coordinate--consuming modules, where a downstream reader
requires a specific frame and $\WW$ is therefore \emph{conditionally} valuable,
exactly as \cref{sec:objectives} sets out. The claim here is only that such
alignment is not, by itself, a route to transferring capability.

\subsection{Takeaways}

The experiments separate two things the phrase ``restore the model'' conflates.
A basis--invariant objective restores the \emph{representation}'s
equivalence--class structure cheaply and label--free, but representation is not
function: a logit term over the full sequence is required to fix the output
coordinate frame and recover capability. And restoration---both
representational and functional---is confined to the subspace spanned by the
training corpus, exactly the ``restored near the fine--tuning subspace''
statement, made quantitative. The scale experiment adds a third clause:
fixing the output coordinate frame is implicit under weight tying but must be
done explicitly---by training the head---when the head is untied, as it is in
most large models.

The sharpest lesson is from the ablation. Aligning representations with a
basis--invariant loss and recovering the teacher's \emph{function} are
near--orthogonal: $L_{\mathrm{cka}}$ alone reaches $\CKA\approx1$ with
capability destroyed, $L_{\mathrm{logit}}$ alone recovers capability with
$\CKA$ left low. The practical driver of capability transfer is
output--function matching; the contribution of the basis--invariant framework
is not a better representational \emph{loss} but the \emph{explanation}---the
output function is the class invariant, which is why logit distillation is
well--posed and why matching hidden coordinates directly is not. This is a
more useful claim than ``use a basis--invariant objective,'' and the data
support it directly. The same ordering survives outside the controlled
restoration setting: in genuine unequal--width distillation from a pristine
student (\cref{res:fmonly}), the feature term alone collapses the model, the
logit term alone slightly improves it, and adding the feature term does not
help---so the lesson is not an artifact of the corruption--and--restore design.

\section{Discussion}
\label{sec:discussion}

\paragraph{What the framework does and does not claim.}
The contribution is a way to \emph{see} teacher supervision, validated where
its consequences are testable. The theory and the experiments meet at one point
in particular: the ablation of Result~2 (\cref{res:ablation}) is the experimental
counterpart of \cref{prop:quotient}. What the proposition asserts about
\emph{where} capability lives, the ablation confirms about \emph{what} recovers
it---the output--function term $L_{\mathrm{logit}}$ restores capability while the
representational term $L_{\CKA}$ does not, even as it drives $\CKA\to1$. Three
boundaries are then worth stating plainly.
First, capability transfer is driven by output--function matching; the
basis--invariant analysis explains why that works (the output is the class
invariant) but a representational loss is not, on its own, a capability
objective (\cref{tab:ablation}). Second, reconstruction is bounded by corpus
coverage: ``teacher $\times\,\Probe$'' is the unit of what transfers, and what
the corpus does not illuminate is not recovered. Third, the controlled studies
fix architecture and share the tokenizer's embedding; cross--model and
small--from--large transfer are consequences we expect but do not establish
here. Fourth, and most importantly for positioning, what we study is
\emph{replication}, not \emph{creation}. Distillation copies the teacher's
output function---on the quotient, within the corpus subspace, and bounded
above by the teacher ($S\!\preceq\!T$); it does not produce capability the
teacher lacks. The places where capability is \emph{originated} rather than
transferred---pretraining a first teacher, covering subspaces no corpus
reaches, and pushing past the current frontier---lie outside this account by
construction. This paper is about how an existing function moves through the
equivalence class, not about how it first comes to exist.

\paragraph{Four objections, and what the evidence says.} The design invites
several objections; we take the strongest head--on. \emph{(1) The restoration
setup shares a coordinate frame, so of course feature matching ``works''
geometrically while missing the function---isn't the collapse just damaged
weights failing to return to their original coordinates?} This is exactly the
point, not a confound: when teacher and student already share a frame, absolute
matching should be \emph{easy}, yet $L_{\CKA}$ still fails to restore the
function while driving $\CKA\!\to\!1$ (\cref{tab:ablation}). A confound would
predict the opposite---shared frames helping feature matching. That it does not
is the cleanest possible demonstration that geometry and function are distinct.
And Result~8 removes the shared frame entirely: an independently pretrained,
pristine student of different width shows the same collapse
(\cref{res:fmonly}), so the conclusion does not rest on the restoration design.
\emph{(2) Dropping cross--entropy is an extreme choice; the $L_{\mathrm{fm}}$
collapse may reflect untuned learning rate or loss weights rather than anything
about feature matching.} The CE--free setup is a control, not a recommendation:
we remove CE precisely so that any capability in the student comes from the
distillation channel under test and not from the labels, isolating what each
loss transfers. The logit channel restores the model in the very same CE--free
regime (\cref{tab:fmonly}, row ii), so the setup is not stacked against
distillation as such---only against feature matching as the sole channel.
\emph{(3) The collapse is an optimization failure of the alignment map $\WW$,
not a representational impossibility---after all, the surrogate with an optimal
$\WW$ is stable.} The surrogate says the opposite. There the alignment is the
\emph{closed--form ridge optimum, recomputed every step}: $\WW$ is never learned
and cannot be mis--optimized, yet feature--only supervision still collapses on
all three metrics (top--1, generation, and self--PPL $\sim\!10^{5}$;
\cref{tab:surrogate}, row i). Optimal alignment removes the optimization
question and the collapse remains, which is the direct evidence that the failure
is structural (\cref{prop:illposed}), not a landscape artifact. \emph{(4) This
attacks a strawman: no modern feature--based method matches absolute
coordinates ($L_{\mathrm{abs}}$); they all insert a projection or match
relationally, i.e.\ family~(B) or (A).} We agree $L_{\mathrm{abs}}$ is a
strawman, and say so---which is why the experiments use the basis--invariant
$L_{\CKA}$ (family A) and a learned/optimal projection $\WW$ (family B), the
objectives real methods actually use, not $L_{\mathrm{abs}}$. The finding is
about those: even a relational or optimally--aligned feature objective sits on
the representation quotient, so on its own it does not select the reader's
representative and does not transfer capability. $L_{\mathrm{abs}}$ appears only
as the ill--posed limit that motivates why alignment is needed in the first
place; the empirical claims are all against the aligned/relational families.

\paragraph{The operational goal: a black--box teacher suffices, but a student
model does not.} The practical content of \cref{def:distill} is that the student
does not seek the teacher's parameters $\WW_T$---copying those would be a file
copy, not distillation---but constructs \emph{different} parameters
$(\HH_S,\WW_S)$ landing in the same joint class, hence computing $f_S=f_T$. This
is why logit matching is the \emph{canonical} objective: in the hardest honest
setting, $\WW_T$ is unavailable (an API teacher, or a multi--agent system with no
single $\WW_T$), and the output is the \emph{only} observable that selects the
fiber---two settings share an output distribution iff they lie in the same fiber
of $\pi_{\mathrm{joint}}$ (\cref{prop:quotient}). It alone transfers capability
(\cref{res:ablation}) and can rebuild a decoder from random initialization
(\cref{res:decoder}). But this identifiability claim does not dispense with the
\emph{student model}. The logit signal is only a \emph{pointwise} sample
$(x,f_T(x))_{x\in\Probe}$; recovering $f_T$ off the probe set requires
generalization, and what fills the gaps is the student's inductive bias. Were
$f_T$ linear a few points would fix it, but a transformer's nonlinear function is
not pinned down pointwise. So ``inputs and outputs suffice'' holds \emph{provided
the student shares the teacher's function family}---which is why within--family
restoration succeeds cleanly and the cross--family case is doubly hard
(\cref{res:crosswidth}): a mismatched tokenizer closes the output channel
\emph{and} a mismatched family removes the inductive bias. In short,
\cref{prop:quotient} settles \emph{which} function capability rides on; recovering
it from finite data is a separate, estimation--level fact resting on the student's
function class.

\paragraph{Where cross--model transfer sits.} Because the third boundary above is
the one a reader is most likely to probe, we state the paper's position on it in
one place. Our theory makes a definite \emph{prediction} about independently
trained models: two such models occupy different representatives of the class
(\cref{app:released}), so transferring a component between them is
well--posed only after an explicit alignment onto a shared frame
(\cref{def:align}), and its success should track the overlap of the boundary
subspaces being joined (\cref{pred:mediate}). We validate this prediction only
\emph{partially}, and are explicit about which part. The within--family graft
study (\cref{sec:graft}) confirms the overlap--predicts--success half
(\cref{pred:mono}) but uses controlled perturbations of one host, so it does not
exercise genuinely distinct pretrained donors. The cross--\emph{width} study
(\cref{res:crosswidth})---Qwen2.5--1.5B into 0.5B, unequal hidden width, shared
tokenizer---goes one step further: it shows the output--function mechanism
(\cref{prop:quotient}) transfers capability across a real width gap with no
shared hidden frame, and that adding the alignment map buys nothing once the
logit signal is present. What remains untested is the hardest case:
\emph{cross--family} transfer between models with independent tokenizers and no
shared vocabulary, where even the output space must be mediated. We do not run
that experiment and do not claim it; it is the natural next test of
\cref{pred:mediate}, and the theory's stance on it---alignment first, then an
output--level objective---is a falsifiable prediction, not a demonstrated result.

\paragraph{Scope of the symmetry group, and a note on model merging.}
Three questions about scope recur, and it is cleaner to answer them together. (i)
\emph{Why $\Orth(d)\times\R_+$ and not more?} The absorbable group is larger:
any invertible $\bm{A}$ with $\Win\mapsto\bm{A}^{-1}\Win$ preserves the function
(\cref{app:noniso}), so the full symmetry is $GL(d)\times$(bias shifts). We work
with $\Orth(d)\times\R_+$ not because the rest is absent but because our
\emph{invariants} (Gram, CKA) are exactly the $\Orth(d)\times\R_+$--invariants:
they are blind to rotation and isotropic scale but see a general linear
map, since $\HH\bm{A}$ changes $\HH\HH^\top$. Matching the residual $GL(d)$ part
is precisely what the alignment of \cref{sec:align} estimates; the two together
cover the full group. So ``why only $\Orth(d)$'' is answered by division of
labor: the relational invariants handle the orthogonal part for free, alignment
handles the rest. (ii) \emph{Non--orthogonal / permutation symmetry.} Neuron
permutations and sign flips are the discrete subgroup of $\Orth(d)$ that also
commutes with a diagonal RMSNorm gain (\cref{app:status}), so they are covered
exactly, not approximately---which is why our account connects to the
permutation and linear--mode--connectivity literature
\citep{ainsworth2023,entezari2022}. (iii) \emph{Model merging.} Merging two
independently trained models is, in these terms, an attempt to average two points
that lie in \emph{different} representatives of \emph{different} classes; it
succeeds only after the representatives are brought into a common frame, which is
why permutation--alignment before averaging \citep{ainsworth2023} helps for
exactly the reason \cref{prop:proc} predicts. Our framework does not claim to
solve merging, but it locates the difficulty precisely: without alignment, the
average of two representatives need not lie in either class, so the merged
function is not controlled by either teacher. This is the same coordinate
sensitivity that governs grafting (\cref{sec:graft}); merging is grafting's
symmetric, two--sided case.

\paragraph{A direction the theory suggests: reusing capability by alignment, not retraining.}
The same geometry points to a way of \emph{amortizing} the cost of building
capable models, which we flag as a consequence the framework predicts rather than
one we test. Training general capability from scratch is expensive; but if that
capability already resides in some donor block, \cref{prop:quotient} says it is
carried by the block's output behavior, a class invariant---the block holds the
capability, merely in its own representative (coordinate frame). Transplanting it
into a new host should then require not retraining the block but only moving it
into the host's frame: estimate the low--capacity alignment $\WW$
(\cref{def:align}, \cref{rem:unequal}), graft in the aligned coordinates, and
fine--tune only the seam. The heavy, capability--bearing weights are reused
verbatim; only the light coordinate converter $\WW$ is learned. \cref{pred:align}
is exactly this recipe read forward: alignment raises graft success most for the
low--overlap pairs that arise when donor and host come from different models, so
the models that would not stitch naively become stitchable once aligned. If it
holds beyond the single--family setting we tested, the payoff is a training--time
saving---capability assembled from pretrained modules rather than relearned.

Three cautions from our own results bound the promise, and none is incidental.
First, $\WW$ must stay \emph{low capacity} (\cref{rem:unequal}): a high--capacity
bridge manufactures structure the donor lacks, at which point one is no longer
reusing capability but retraining it under another name---the training--time
argument collapses with it. Second, overlap sets a ceiling but depth governs
sensitivity (\cref{sec:graft}): grafts near the readout fail even at
$\CKA\!\approx\!1$, so alignment does not make every transplant succeed. Third,
this is \emph{replication, not creation} ($S\!\preceq\!T$): assembly reuses
capability that already exists in some donor, and originates none---so it
shortens the path to a model \emph{as capable as its parts}, not past them.
Within those limits, the framework recasts ``train a capable model'' as ``align
and reuse capable parts,'' and predicts when that substitution is available:
exactly when boundary subspaces overlap or can be aligned into overlap. We test
the overlap--predicts--success half of this (\cref{sec:graft}); the cross--model
assembly it motivates is left, with \cref{pred:mediate}, to future work.

A preliminary cross--width probe (\cref{res:crosswidth}) qualifies the recipe:
when the goal is \emph{capability} rather than coordinate reuse, estimating $\WW$
buys nothing over logit--KD alone (the two match across seeds while $\WW$ doubles
the cost). So $\WW$ earns its keep only where \emph{coordinates} must move---feature
grafting, where a host reads the donor's features directly---not where the output
function alone is the target.

\paragraph{A consequence: collapsing multi--agent systems into one model.}
The same reading applies to a setting we do not test but that the framework
organizes cleanly: distilling a multi--agent system---a debate, a planner with
executors, an orchestrator over specialists---into a single model. Recent work
trains one model to internalize multi--agent dynamics
(e.g.\ debate consensus, critique--and--revision) so that explicit test--time
interaction becomes an implicit capability of a single forward
pass~\citep{agentark2026,prost2025}. In our terms, the multi--agent system is
the teacher. Its \emph{joint output function}---the answer the collective
produces, with the reasoning that produced it---is the class invariant to target.
Supervising the single model on that output function is exactly the
output--function matching of \cref{sec:objectives,sec:experiments}, not a
matching of the agents' internal representations. The framework also predicts
the boundary observed in that work: a single model can \emph{replicate} the
collective's output function on the trajectories that cover it, but does not
thereby \emph{exceed} the collective ($S\!\preceq\!T$), and gains nothing on
inputs the interaction trajectories do not reach---the multi--agent analogue of
corpus--bounded restoration (\cref{sec:experiments}). Whether collapsing the
interaction into the weights forfeits capability that genuinely \emph{requires}
test--time interaction (rather than merely expressing it) is, by our
replication/creation distinction, the right question to ask---and one our
controlled studies do not settle.

\paragraph{A representational reading of variance collapse.}
The same equivalence--class lens connects to a phenomenon usually described at
the reward level. Iterated self--training (generate, verify, filter, retrain)
can reduce the effective dimension of the student's representation; with the
participation ratio of the Gram spectrum $\{\mu_i\}$,
$\mathrm{PR}=(\sum_i\mu_i)^2/\sum_i\mu_i^2$, a basis--invariant effective rank,
a monotone decrease of $\mathrm{PR}$ across rounds is the representational
analogue of reward--variance collapse ($\widehat{\sigma}_R\equiv0$): vanishing
output diversity coincides with a shrinking feature dictionary, in the sense
of representational capacity studied for superposition~\citep{elhage2022}. Because verifier reward is binary the signal is
sparse and collapse can be fast; STaR~\citep{zelikman2022} and
ReST~\citep{gulcehre2023} mitigate it with temperature sampling and periodic
re--anchoring. Relational objectives (family~(A)) are a candidate further
mitigation, since they constrain the Gram spectrum directly rather than letting
it collapse onto a single passing mode. We flag this as a connection, not a
result: it indicates that the representational and reward--level views of
collapse are two readings of the same loss of class volume.

\paragraph{Practical implications: how should one distill?}
The geometry yields a short decision rule, organized by what one wants to
transfer. \emph{To transfer capability}, use logit (output--function) matching:
it is the only objective natively defined on the joint quotient, so it targets
the invariant capability depends on without any alignment step
(\cref{prop:quotient}, Result~2). \emph{To transfer representational geometry}
when student and teacher may differ in width, use a relational loss
($L_{\mathrm{rel}}$, $L_{\CKA}$): it is basis--invariant, needs no alignment, and
applies across dimensions---but, on its own, it does not restore capability, so
it should accompany rather than replace the logit term. \emph{To transfer a
module by grafting}, where a downstream block will read features
coordinatewise, first estimate a low--capacity alignment (\cref{sec:align}) and
match in the aligned frame; expect success to track boundary subspace overlap
(\cref{sec:graft}). The combined objective
$\Obj=L_{\mathrm{logit}}+\lambda L_{\mathrm{rel}}$ (\cref{eq:combined}) is the
default: logits carry capability, the relational term carries structure, and
neither is asked to do the other's job. The single mistake the theory rules out
is matching absolute features---raw MSE or cosine---which targets a coordinate
system that does not exist.

\section{Conclusion}

The intuition that distillation happens ``in the basis'' is right in spirit and
wrong in letter. A representation is not a feature matrix but an
orthogonal--scaling equivalence class; absolute feature matching penalizes
function--preserving changes within that class (\cref{prop:illposed},
\cref{tab:rotation}), and the network's output function is precisely the
invariant of the class that capability depends on (\cref{prop:outinv}). The
experiments make the consequence concrete: aligning representations and
restoring capability are near--orthogonal, and what transfers is the
output--function invariant on the subspace the corpus covers---enough to rebuild
a decoder from random initialization. The same invariants predict
module--graft success through subspace overlap and recast the empirically
decisive role of domain alignment as mediation through representational
overlap.

\smallskip
Knowledge distillation should therefore be viewed not as feature matching, but
as \emph{supervision over representation equivalence classes}. Capability is
transferred through their output--function invariants. What a teacher
knows lives in the class; what transfers is the part of that class the
objective and the corpus actually target.

\smallskip
If there is one sentence to carry away, it is this: \emph{teacher supervision
should target the class, not the coordinates.} The coordinates are an accident
of training; capability is fixed by the one invariant the class determines---its
output function---and distillation works when, and only when, it speaks in that
invariant. We have argued the point geometrically, isolated it experimentally,
and traced it through restoration, reconstruction, and grafting. Its value is as
a lens: once a representation is seen as a class rather than a matrix, ``what
should we match?'' answers itself---match what is defined on the quotient. The
long list of distillation losses then collapses into one distinction: whether an
objective is a function of the class, or of the basis we were never entitled to
read.

\appendix

\section{Exactness of the assumption and the status of the propositions}
\label{app:status}
\emph{When the absorbing identity is exact.} \Cref{ass:absorb} holds
\emph{exactly} for a $\QQ$ when the operator $\Win$ consuming $\HH$ is linear and
the intervening normalization commutes with $\QQ$. Plain RMSNorm (norm rescaling,
no learned gain) commutes with any $\QQ\in\Orth(d)$; practical RMSNorm carries a
learned per--channel scale $\bm{\gamma}$, which does not, since $\QQ$ mixes
channels while diagonal $\bm{\gamma}$ acts per axis. Exact commutation then holds
only on the subgroup preserving $\bm{\gamma}$ (permutations, sign flips, rotations
within equal--$\gamma$ eigenspaces); for general $\QQ$ the relation is
\emph{approximate}, with error set by the spread of $\bm{\gamma}$.

\emph{Are the propositions exact theorems or approximations?} The answer is
precise once the group is named. On the subgroup
$\Grp_0\subseteq\Orth(d)\times\R_+$ for which \cref{ass:absorb} holds
exactly---all of $\Orth(d)\times\R_+$ for a linear reader with plain RMSNorm, and
at least the $\bm{\gamma}$--preserving subgroup for practical RMSNorm---the
propositions \cref{prop:illposed,prop:outinv,prop:quotient} are \emph{exact
theorems}: the absorbing identity~(\ref{eq:absorb}) is an algebraic equality, so
$f^g=f$ holds with no error for every $g\in\Grp_0$, and the factorization
$f=\bar f\circ\pi_{\mathrm{joint}}$ over $\Grp_0$ is exact. What is
\emph{approximate} is only the \emph{size} of the group: for a general $\QQ$ not
preserving $\bm{\gamma}$, absorption incurs an error controlled by the spread of
$\bm{\gamma}$, so the equivalence class is at least $\Grp_0$ but the full
$\Orth(d)\times\R_+$ only approximately. This is why the theory is stated as exact
propositions under \cref{ass:absorb} while the experiments
(\cref{sec:experiments}) report the consequences on real transformers, where the
operative group is $\Grp_0$ enlarged by whatever near--$\bm{\gamma}$--preserving
directions the trained model tolerates: the qualitative claims (ill--posedness,
output--only capability transfer) hold as soon as $\Grp_0$ is nontrivial, which it
always is. \Cref{app:mlp} verifies the exact case on a normalization--free MLP.

\section{The maps in a transformer}
\label{app:transformer}
Concretely, take $h=\HH_\ell$ to be the residual stream after block $\ell$
(width $d$). The next block consumes it through linear projections: in
attention, the query/key/value maps $\bm{W}_Q,\bm{W}_K,\bm{W}_V\in\R^{d\times m}$
applied to (normalized) $h$; here $\Win$ is any one of these and $m$ the head or
value width. The remainder of the block and all later blocks, the final norm,
and the unembedding $\bm{W}_{\mathrm{out}}\in\R^{d\times V}$ compose into
$\rho$, so $\phi_{\ell{:}L}=\rho\circ\psi$ maps the $d$--dimensional
representation to the $V$--dimensional logits. A rotation $h\mapsto h\QQ$ of the
residual stream is absorbed by replacing each consuming projection
$\Win\mapsto\QQ^\top\Win$ (equivalently $\bm{W}_Q\mapsto\QQ^\top\bm{W}_Q$, and so
on): the products $(h\QQ)(\QQ^\top\Win)=h\Win$ that enter every downstream
computation are unchanged, because $\QQ\QQ^\top=\bm{I}$. The only obstruction is
the normalization between $h$ and these projections. The simplest instance
removes even that obstruction: in a plain MLP $x\to\bm{W}_1\to
\mathrm{ReLU}\to h\to\bm{W}_2\to\text{logits}$, the consuming operator is
$\Win=\bm{W}_2$ with $\tilde\psi=\mathrm{id}$ (no normalization), so the
cancellation~(\ref{eq:absorb}a) is the exact linear identity
$(h\QQ)(\QQ^\top\bm{W}_2)=h\bm{W}_2$. \Cref{app:mlp} verifies this and the
consequence~(\ref{eq:absorb}b) to machine precision ($\sim\!10^{-14}$), step by
step, and shows that the un--compensated rotation changes the function.

\section{\texorpdfstring{What a released model fixes: $\theta$, not $\HH$}{What a released model fixes: theta, not H}}
\label{app:released}
Publishing a model releases its parameters $\theta$ (embedding, per--layer
weights $\Win^{(\ell)}$, unembedding), \emph{not} a representation $\HH$. The
representation of \cref{def:representation} is a \emph{derivative}, materialized
only by running $\theta$ on a chosen probe set $\Probe$; two downloaders with
different probes obtain different $\HH$ from the same file. What is shipped is the
ability to compute $f$; what a student extracts is set by the probes it chooses
(the ``teacher $\times\,\Probe$'' unit of \cref{def:distill}).

Although $\HH$ is identifiable only up to its class $[\HH]$ (\cref{def:equiv}),
the shipped $\theta$ nonetheless \emph{fixes a representative}: it wires a
specific $\Win^{(\ell)}$ to read $\HH$ in one coordinate frame, committing to a
single point of the joint quotient (\cref{prop:quotient}) with no record that the
counter--rotated $(c\,\HH\QQ,\ \theta^{\QQ,c})$ computes the same $f$. Function
and file are consistent at two levels: the function depends only on the class,
while the file commits to one frame. This is exactly why coordinate--sensitive
operations across independently released models (grafting, \cref{sec:graft}; an
untied head, \cref{res:scale}) need an explicit alignment (\cref{def:align}). Each
model was published in its own frame; there is no shared basis until one is
estimated.

\section{Non--isotropic rescaling and the full symmetry group}
\label{app:noniso}
A per--feature diagonal $\Lam$ can also be function--preserving when the next
operator absorbs $\Lam^{-1}$, so the \emph{full} symmetry group of the network
is larger than $\Orth(d)\times\R_+$. We do not claim our invariants capture that
larger group; we claim only that absolute coordinate matching is ill--posed
because the function is invariant to at least the $\Orth(d)$ action
(\cref{prop:illposed}), and that the admissible targets of \cref{sec:invariants}
are exactly the invariants of $\Orth(d)\times\R_+$. Matching the diagonal part
requires the alignment maps of \cref{sec:align}, not the relational invariants.

\section{\texorpdfstring{Two objections to calling $\HH$ the ``representation''}{Two objections to calling H the representation}}
\label{app:naming}

The main text fixes our usage---$\HH$ (activations) is the
representation, $\WW$ the reader. Two readers will each expect the word to mean
something else; we address both.

\emph{The representation--theory reader} expects a homomorphism
$\varrho:\Grp\to GL(V)$ realizing an abstract group by linear maps. That is not
what $\HH$ is. A group does appear---$\Grp=\Orth(d)\times\R_+$---but it
\emph{acts on} the feature matrix $\HH$ by $h\mapsto c\,h\QQ$; $\HH$ does not
\emph{represent} $\Grp$. In representation--theory terms the relevant object is
the $\Grp$--action and its orbits, not a representation of $\Grp$ on a vector
space; $\HH$ carries no such homomorphism, it is data. The ``representation
equivalence class'' means the orbit of $\HH$ under this action
(\cref{def:equiv}), with nothing about characters, irreducibles, or intertwiners
intended.

\emph{The computer--science reader} may expect the opposite assignment: that the
\emph{weights} $\WW$---the fixed, shipped, linear--operator part of the model---are
its ``representation,'' while the activations $\HH$ are mere intermediate values.
That intuition is reasonable ($\WW$ is what one publishes, \cref{app:released};
a linear map is closer to the representation--theory sense). We use the opposite
convention deliberately and for one reason: it is the established usage in
representation learning, where ``learned representation,'' ``hidden
representation,'' and the entire representation--similarity literature ($\CKA$ and
its relatives, \cref{sec:invariants}) name the layer activations $\HH$, not the
weights. Our results are stated in that vocabulary so they connect to it.

\section{The claims on a plain MLP, where \cref{ass:absorb} is exact}
\label{app:mlp}

The paper's account is not specific to transformers; it is a property of
representation learning. To make this concrete---and to remove the one
approximation the main text carries (RMSNorm's learned gain does not commute
with a general rotation, \cref{app:status})---we reproduce the four core claims
on a $2$--layer MLP with \emph{no normalization}, where \cref{ass:absorb} holds
\emph{exactly}: a hidden rotation is absorbed by the next layer's weights to
machine precision. The network is
$x\!\to\!\bm{W}_1\!\to\!\mathrm{ReLU}\!\to\! h\!\to\!\bm{W}_2\!\to\!\text{logits}$,
trained on a two--moons task ($\sim\!0.998$ train accuracy); $h\in\R^{N\times16}$
is the representation. Forward and backward passes are hand--written in NumPy
(no autograd), so every step is inspectable. \Cref{tab:mlp} collects the
results.

\begin{table}[h]
\centering
\caption{The four claims on a normalization--free MLP. \cref{ass:absorb} is
exact here, so the invariances hold to machine precision.}
\label{tab:mlp}
\small
\begin{tabular}{p{0.30\textwidth} p{0.62\textwidth}}
\toprule
Claim & Result on the MLP\\
\midrule
(1) \cref{prop:illposed}: absolute matching ill--posed &
A function--preserving rotation $h\!\mapsto\!hQ$ sends
$\norm{h-hQ}_F^2/N$ from $0$ to $28.9$ across $t\in[0,1]$, while
$1-\CKA$ and $L_{\mathrm{rel}}$ stay at $0$ (to $<\!10^{-9}$) for every $t$.\\
\addlinespace
(2) \cref{ass:absorb}/\cref{prop:outinv}: joint invariance &
The cancellation~(\ref{eq:absorb}a) is exact here ($\tilde\psi=\mathrm{id}$,
no normalization): $\norm{(h\QQ)(\QQ^\top\bm{W}_2)-h\bm{W}_2}\approx10^{-14}$.
Hence its consequence~(\ref{eq:absorb}b)---the joint reparametrization
($h\!\mapsto\!h\QQ$, $\bm{W}_2\!\mapsto\!\QQ^\top\bm{W}_2$)---preserves the
function: $\max\lvert f^{\QQ}-f\rvert\approx10^{-14}$. Rotating $h$ \emph{without}
compensating $\bm{W}_2$---call this network $f_{\text{rot}}$, i.e.\ $h\mapsto
h\QQ$ with $\bm{W}_2$ left unchanged---changes it:
$\max\lvert f_{\text{rot}}-f\rvert=25.5$. Invariance is joint, not of $h$ alone (\cref{rem:joint}).\\
\addlinespace
(3) Representation $\neq$ function (the ablation) &
A student hidden layer trained to match the teacher representation by a
basis--invariant objective reaches $\CKA=0.79$ but $\mathrm{KL}(T\Vert S)=6.18$,
top--1 agreement $0.50$. Trained instead to match the output function (logits),
it reaches $\mathrm{KL}=7\times10^{-4}$, top--1 $\approx\!1.00$. Aligning geometry and
restoring function are near--orthogonal, as in \cref{tab:ablation}.\\
\addlinespace
(4) Procrustes alignment &
For $\HH_r=\HH\QQ$, solving
$\QQ_P=\arg\min_{\QQ'}\norm{\HH-\HH_r \QQ'}_F$ reduces the relative
residual from $1.41$ to $\approx10^{-14}$ and recovers $\QQ_P=\QQ^\top$ to
$\approx10^{-12}$: alignment finds the rotation, after which coordinate matching
is well--posed.\\
\bottomrule
\end{tabular}
\end{table}

The contrast with the main experiments is the point. In a transformer,
\cref{ass:absorb} is exact only on the $\bm{\gamma}$--preserving subgroup and
approximate otherwise, yet the same phenomena appear (\cref{tab:rotation,tab:ablation}).
In the MLP there is no $\bm{\gamma}$, the assumption is exact, and the
invariances hold to machine precision. That the qualitative story is identical
in both---feature matching penalizes a function--preserving change the
invariants ignore (1), the output is invariant only under the joint action
(2), aligning the representation does not restore the function (3), and an
explicit alignment makes coordinate matching well--posed (4)---is evidence that
the account is structural, not an artifact of any particular architecture. The
script (NumPy only, no autograd) runs in seconds and reproduces these numbers
to the reported precision across independent machines.

\section{Orbit, fiber, and quotient as runnable code}
\label{sec:code-appendix}

The three geometric objects the paper is built on---\emph{orbit}, \emph{fiber},
and \emph{quotient}---are, for a reader who thinks in programs, three short
pieces of code. This appendix gives a self--contained NumPy script (seed--fixed,
no autograd) that constructs each one and prints what it does; \cref{rem:csorbit}
gives the same correspondences in prose.

We build them in the order a computer scientist finds easiest---starting from
the two notions already familiar and ending at the two that are not. \emph{(i)~A
feature is an IR.} The vector \texttt{h} below is an internal representation, the
network's analogue of a compiler's SSA form: a scratch encoding the computation
uses, not the answer it returns. \emph{(ii)~The invariant is the output.} Rewrite
that IR---rotate \texttt{h}, and update the reader to match---and the returned
value is unchanged; the output function is what stays fixed, exactly as an
executable is unchanged by renaming a program's local variables. These two steps
need no group theory. \emph{(iii)~The set of IR--rewrites sharing one output is an
orbit,} and \emph{(iv)~collapsing each orbit to a single object is a quotient}---and
here the payoff is a distinction code makes unavoidable: there are \emph{two}
quotients, a coarser one that sees only the IR's geometry (what $\CKA$ measures)
and a finer one that sees the output function, and capability lives only on the
finer one. Everything below runs from one setup: a single hidden representation
\texttt{h} and the reader \texttt{W2} that consumes it, exactly as in the
introduction's three--line puzzle.
\begin{lstlisting}
import numpy as np, hashlib
rng = np.random.default_rng(0)
N, d, V = 6, 8, 4
x  = rng.standard_normal((N, d)); W1 = rng.standard_normal((d, d))
W2 = rng.standard_normal((d, V))
h  = x @ W1                       # a hidden representation  (N x d)
y  = h @ W2                       # the output it produces   (N x V)

def rotation(n):                  # a random orthogonal change of basis
    Q, _ = np.linalg.qr(rng.standard_normal((n, n))); return Q
def clone(h, W2, Q, c):           # the joint rewrite: absorb (Q, c)
    return c * (h @ Q), (1.0 / c) * (Q.T @ W2)
def out_hash(h, W2):              # a 'behavior' = hash of the output function
    return hashlib.sha256(np.round(h @ W2, 6).tobytes()).hexdigest()[:12]
def gram_id(h):                   # canonical id of the REPRESENTATION quotient
    G = h @ h.T                   # h h^T is invariant to h -> c h Q
    return hashlib.sha256(np.round(G/np.linalg.norm(G),6).tobytes()).hexdigest()[:12]
\end{lstlisting}

\paragraph{Feature $=$ IR; invariant $=$ output.}
The starting point, before any orbit or quotient: the feature \texttt{h} is the
network's \emph{intermediate representation}, and rewriting it---rotating and
rescaling, with the reader updated in compensation---is a semantics--preserving
transformation. The IR changes substantially ($\lVert h_{\text{ir}}-h\rVert$
large) while the output is bit--for--bit the same (identical hash), the way
renaming an SSA temporary leaves the emitted executable unchanged. The output is
the invariant; the feature is not.
\begin{lstlisting}
r = np.random.default_rng(12345)           # isolated: doesn't perturb below
Q0, _ = np.linalg.qr(r.standard_normal((d, d)));  c0 = 1.7
h_ir, W2_ir = clone(h, W2, Q0, c0)         # re-coordinatize the IR
print(f"IR changed?  ||h_ir - h|| = {np.linalg.norm(h_ir - h):.3e}")
print(f"output same? {np.allclose(h_ir @ W2_ir, y)}  "
      f"(hash {out_hash(h, W2)} -> {out_hash(h_ir, W2_ir)})")
\end{lstlisting}
\begin{lstlisting}[language=,basicstyle=\ttfamily\footnotesize,commentstyle=\ttfamily]
IR changed?  ||h_ir - h|| = 3.375e+01
output same? True  (hash 4f39011d0b0d -> 4f39011d0b0d)
\end{lstlisting}

\paragraph{Orbit $=$ all implementations of one program.}
An orbit is the set of networks obtained from $(h,\WW_2)$ by the joint rewrite
$(h,\WW_2)\mapsto(c\,h\QQ,\ c^{-1}\QQ^\top\WW_2)$. Each stores wildly different
numbers in its hidden layer (large $\lVert h_{\text{var}}-h\rVert$) yet computes
the identical output (identical hash), which is the orbit's defining property.
\begin{lstlisting}
print(f"{'variant':<20}{'||h_var - h||':>15}{'output hash':>15}{'==y?':>7}")
print(f"{'original':<20}{0.0:>15.3e}{out_hash(h,W2):>15}{'yes':>7}")
for k in range(1, 4):
    Q, c = rotation(d), float(rng.uniform(0.5, 2.0))
    hk, W2k = clone(h, W2, Q, c)
    print(f"{'clone %d'%k:<20}{np.linalg.norm(hk-h):>15.3e}"
          f"{out_hash(hk,W2k):>15}{('yes' if np.allclose(hk@W2k,y) else 'NO'):>7}")
\end{lstlisting}
\begin{lstlisting}[language=,basicstyle=\ttfamily\footnotesize,commentstyle=\ttfamily]
variant               ||h_var - h||    output hash   ==y?
original                  0.000e+00   4f39011d0b0d    yes
clone 1                   2.618e+01   4f39011d0b0d    yes
clone 2                   3.017e+01   4f39011d0b0d    yes
clone 3                   1.963e+01   4f39011d0b0d    yes
\end{lstlisting}

\paragraph{Fiber $=$ everything that maps to one output.}
Fixing a behavior (here, the output--hash of $y$), its \emph{fiber} is the
preimage under the output map---every $(h,\WW_2)$ producing that behavior. All
$200$ joint rewrites land in it, because the rewrite is exactly the
function--preserving move; a bare rotation that does \emph{not} compensate the
reader leaves the fiber. This is the code form of ``same executable / same hash /
same API'': the fiber is the set of sources that build to one artifact.
\begin{lstlisting}
inside = sum(out_hash(*clone(h, W2, rotation(d), float(rng.uniform(0.5,2.0)))) 
             == out_hash(h, W2)  for _ in range(200))
print(f"joint rewrites landing in fiber of y : {inside}/200")
y_rot = (h @ rotation(d)) @ W2     # rotate h but DON'T fix the reader
print("rotation without compensating reader :",
      "in" if np.allclose(y_rot, y) else "OUTSIDE", "the fiber")
\end{lstlisting}
\begin{lstlisting}[language=,basicstyle=\ttfamily\footnotesize,commentstyle=\ttfamily]
joint rewrites landing in fiber of y : 200/200
rotation without compensating reader : OUTSIDE the fiber
\end{lstlisting}

\paragraph{Quotient $=$ collapse each orbit to one point---but which quotient?}
This is the paper's central distinction (\cref{prop:quotient},
\cref{fig:orbit}) made executable. Take \emph{two} programs, $A$ and $B$, that
share the same hidden representation $h$ but have \emph{different} readers
($\WW_2$ vs.\ a fresh $\WW_2^B$), and make $30$ clones of each. Collapsing by the
representation quotient---keyed on the Gram invariant $h\,h^\top$, which is what
$\CKA$ sees---merges $A$ and $B$ into \emph{one} class: geometry alone cannot
tell them apart. Collapsing by the joint quotient---keyed on the output
function---keeps them as \emph{two} pure classes. Capability lives on the finer,
joint quotient; the representation quotient is strictly coarser.
\begin{lstlisting}
W2b = rng.standard_normal((d, V))                 # program B: different reader
rows = []
for name, ww in (("A", W2), ("B", W2b)):
    for _ in range(30):
        Q, c = rotation(d), float(rng.uniform(0.5, 2.0))
        rows.append((name, *clone(h, ww, Q, c)))
joint, repq = {}, {}
for name, hk, wk in rows:
    joint.setdefault(out_hash(hk, wk), set()).add(name)
    repq.setdefault(gram_id(hk),      set()).add(name)
print(f"representation quotient (gram) -> {len(repq)} class ; "
      f"A and B collapsed together: {any(len(v) > 1 for v in repq.values())}")
print(f"joint quotient (output func)   -> {len(joint)} classes; "
      f"every class pure (one program): {all(len(v) == 1 for v in joint.values())}")
\end{lstlisting}
\begin{lstlisting}[language=,basicstyle=\ttfamily\footnotesize,commentstyle=\ttfamily]
representation quotient (gram) -> 1 class ; A and B collapsed together: True
joint quotient (output func)   -> 2 classes; every class pure (one program): True
\end{lstlisting}
The last block is \cref{fig:orbit} in code: $A$ and $B$ sit in the same
representation--quotient point (the same ``meeting room,'' \cref{rem:joint}) yet
different joint--quotient points (different ``seats''), and it is the seat---the
output function---that capability reads. A distillation objective defined on the
Gram invariant cannot distinguish $A$ from $B$; one defined on the output
function can. That is the paper's thesis, executable in forty lines. The full
script---printing every intermediate result---is distributed with the paper's
code release.

\end{document}